\newtheorem{definition}{Definition}
\newtheorem{theorem}{Theorem}
\newtheorem{lemma}{Lemma}
\newtheorem{step}{Step}
\newtheorem{assumption}{Assumption}
\newtheorem{proposition}{Proposition}
\newcommand{\initdist}{f}
\newcommand{\E}{\mathbb{E}}
\newcommand{\policy}{\pi}
\newcommand{\piparas}{\theta}
\newcommand{\qparas}{w}
\newcommand{\lamparas}{\xi}
\newcommand{\state}{s}
\newcommand{\st}{{\state_t}}
\newcommand{\stp}{{\state_{t+1}}}
\newcommand{\action}{a}
\newcommand{\at}{{\action_t}}
\newcommand{\reward}{r}
\newcommand{\density}{p}
\newcommand{\pdyn}{\density}
\begin{document}

	\title{Learn Zero-Constraint-Violation Policy in Model-Free Constrained Reinforcement Learning}

	\author{Haitong Ma, Changliu Liu, Shengbo Eben Li, Sifa Zheng, Wenchao Sun, and Jianyu Chen
		\thanks{Haitong Ma, Shengbo Eben Li, and Sifa Zheng are with the State Key Laboratory of Automotive Safety and Energy, School of Vehicle and Mobility, Tsinghua University, Beijing 100084, China, and also with the Center for Intelligent Connected Vehicles and Transportation, Tsinghua University, Beijing 100084, China (e-mail: \{maht19@mails, lisb04@, zsf@\}tsinghua.edu.cn).}
		\thanks{Changliu Liu are with the Robotics Institute, Carnegie Mellon University, Pittsburgh,  PA 15213 USA (e-mail: cliu6@andrew.cmu.edu).}
		\thanks{Jianyu Chen are with the Institute of Interdisciplinary Information Science, Tsinghua University, Beijing 100084, China, and also with Shanghai Qizhi Insititute, Shanghai 200232, China (e-mail: jianyuchen@tsinghua.edu.cn).}
		\thanks{All correspondence should be sent to Shengbo Eben Li.}}
	
\markboth{IEEE Transactions on Neural Networks and Learning Systems}
	{Haitong Ma \MakeLowercase{\textit{et al.}}: Learn Zero Constraint Violation Policy in Model-Free Constrained Reinforcement Learning}
	
% 	\IEEEpubid{0000--0000/00\$00.00~\copyright~2021 IEEE}
	
	\maketitle
	
	\begin{abstract}
		In the trial-and-error mechanism of reinforcement learning (RL), a notorious contradiction arises when we expect to learn a \emph{safe} policy: how to learn a safe policy without enough data and prior model about the dangerous region? 
		%% without enough data and prior model about the dangerous region. 
		Existing methods mostly use the posterior penalty for dangerous actions, which means that the agent is not penalized until experiencing danger. This fact causes that the agent cannot learn a zero-violation policy \emph{even after convergence}. Otherwise, it would not receive any penalty and lose the knowledge about danger. In this paper, we propose the safe set actor-critic (SSAC) algorithm, which confines the policy update using safety-oriented energy functions, or the \emph{safety indexes}. The safety index is designed to increase rapidly for potentially dangerous actions, which allow us to locate the \emph{safe set} on the action space, or the \emph{control safe set}. Therefore, we can identify the dangerous actions \emph{prior to} taking them, and further obtain a zero constraint-violation policy after convergence.
		%%People might not know what is safe set algorithm, or energy function safety certificate. Try to use a more readable way to introduce your insight here.
		%%obtains a zero constraint-violation policy after convergence?
% 		Some related studies use models to compute energy function and project a specific action into the control safe set every time the agent interacts with the environment. Unlike them, 
		We claim that we can learn the energy function in a model-free manner similar to learning a value function. By using the energy function transition as the constraint objective, we formulate a constrained RL problem. We prove that our Lagrangian-based solutions make sure that the learned policy will converge to the constrained optimum under some assumptions. The proposed algorithm is evaluated on both the complex simulation environments and a hardware-in-loop (HIL) experiment with a real controller from the autonomous vehicle. Experimental results suggest that the converged policy in all environments achieve zero constraint violation and comparable performance with model-based baseline.
	\end{abstract}
	
% 	\begin{IEEEImpStatement}
% 		This paper propose an algorithm which is able to learn a zero-violation safe policy with limited prior information in  reinforcement learning. Our novelty is that we give a possible solution to the long-standing paradox when considering safety in the trail-and-error learning mechanism: how to learn a safe policy without necessarily experiencing it?
% 		This paradox forces that one must either allow a few violation with model-free RL, or require model knowledge for a solid safe policy. By learning a safety-oriented energy function in a model-free manner and enforcing the safe set constraint with a Lagrangian-based constrained optimization, our approach dramatically reduces the required prior knowledge to obtain a solid safe policy. It is not only advanced in theory, but also gives a promising approach to unleash the huge potential of RL on a variety of real-world safety-critical tasks, for example, autonomous driving demonstrated in our experimental results.
% 	\end{IEEEImpStatement}
	
	\begin{IEEEkeywords}
		Constrained reinforcement learning, safety index, safe reinforcement learning, zero-violation policy.
	\end{IEEEkeywords}

	\section{Introduction}
	
	\IEEEPARstart{R}{einforcement} learning has drawn rapidly growing attention for its superhuman learning capabilities in many sequential decision making problems like Go \cite{silver2016mastering}, Atari Games\cite{mnih2013playing}, and Starcraft\cite{vinyals2019grandmaster}. However, such a powerful technique has not been quickly put into widespread real-world applications.
	%%has not been widely applied to real world applications.
	% , due to the lack of  safety assurance, even with the converged policy. 
	For instance, most of the remarkable improvements in robotics control \cite{haarnoja2018softb}, autonomous driving \cite{chen2021interpretable}, or surgical robotics \cite{richter2019open}, are only limited in the simulation platforms. One of the major issues of the real-world applications of RL algorithm is the lack of safety assurance, which can be summarized as the notorious paradox: how to learn a safe policy without enough data or prior model about the dangerous region?
	
% 	\begin{wrapfigure}{l}{0.5\linewidth}
%         \vspace{-20pt}
%         \begin{center}
%         \includegraphics[width=\linewidth]{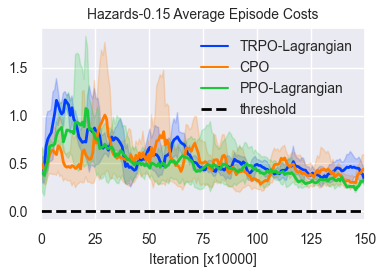}
%         \end{center}
%         \vspace{-15pt}
%         \caption{Constrained RL with \emph{zero constraint threshold }. All baseline algorithms can not converge to a zero constraint-violation policies.}
%         \vspace{-10pt}
%     \end{wrapfigure}
    
	Most model-free RL with safety constraints studies interacts with the environment through the cost function, a similar environment response like the reward function, to formulate safety measurements \cite{altman1999constrained}. A commonly used cost function is the pulse signal, which returns 1 if the action is dangerous, otherwise 0 \cite{geibel2005risk, achiam2017constrained, ray2019benchmarking,yang2020projection,tessler2018reward}. This pulse signal is a \emph{posterior} measurement of safety, which means that the agent is penalized only after taking the dangerous action. It causes that one must collect \emph{many dangerous samples} to estimate the value function or advantage function of cost, and then provide danger-aware gradient to update policy. We provide a training sample in a commonly used safe RL benchmark, Safety Gym in Figure \ref{fig:intro}. the constraint threshold is set as zero, but the model-free constrained RL baselines cannot ensure zero violation \emph{even after convergence}. Similar problem also happens in other constraint formulation like risk-sensitive or chance constraint \cite{yang2021wcsac, chow2017risk}. Therefore, most model-free constrained RL studies allow \emph{a small tolerance of constraint violation} in their problem settings \cite{achiam2017constrained, chow2017risk, ray2019benchmarking}. This danger tolerance is not acceptable when the trained policies are applied to real-world safety-critical problems.% Two major reasons can explain that why these methods can nod handle the zero violation tasks: (1) the cost signal is too sparse to affect gradient and further confine the policy update; (2) the problem design is so conservative that a meaningful policy can hardly be obtained. Intuitively, the agent will not cause any collision if it just stands still; the combination of a constraint threshold at zero and the pulse cost function usually ends in a policy which is not willing to move anywhere \cite{ray2019benchmarking}.
		\begin{figure}[ht]
        \centering
        \includegraphics[width=0.6\linewidth]{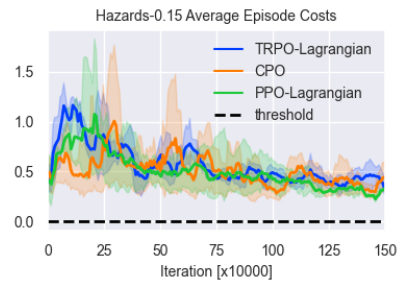}
        \caption{Model-free constrained RL baselines with \emph{zero constraint threshold}. All baseline algorithms can not converge to a zero constraint-violation policies.}
        \label{fig:intro}
    \end{figure}
    
	There are studies which adopt model information to predict future trajectory to enhance safety rather than taking the risk to explore the danger \cite{berkenkamp2017safe, duan2019deep, chen2017constrained}. The most widely used branch in safety enhancement of RL is to use the energy-function-based safety certificate to construct a safety monitor, or a safe controller. The energy function is assigned to be high for those safe states, and typical energy function definitions includes control barrier function (CBF) \cite{ames2019control,cheng2019end,ma2021model,qin2021learning} and \emph{safety index} in the safe set algorithm (SSA) \cite{liu2014control,zhao2021modelfree}. They first determine the \emph{control safe set} in the action space as the set of actions to make the state safe set attractive and forward invariant, which is achieved by making the energy function dissipate. If the base controllers (the policies in RL problems) output actions outside the control safe set, the safe controllers find the optimal safe actions through solving convex optimizations, for example, projecting the action into the control safe set \cite{cheng2019end}. After the projection, the persistent safety, or \emph{forward invariance}, is guaranteed \cite{wei2019safe}. Therefore, the energy function is a prior measurement of danger and enhances safety inside the safe region in the state space. Although providing provable safety guarantee, most energy-function-based safe controller relies on explicit dynamics model to execute the action projection, for example, learning-calibrated models \cite{cheng2019end}, specific type of models \cite{fisac2018general} or linearity assumption \cite{dalal2018safe}. Applying these methods to general complex tasks with no prior models is challenging. 
	
	% In this paper, we manage to find a solution to learn a zero constraint-violation policy  % Take the case of safe set algorithm, where the safe action set is defined by $\{a \mid \phi(s')<\max \{\phi(s)-\eta, 0\}\}$ with the safety certificate $\phi(\cdot)$, named as the safety index \cite{liu2014control}; All the necessary information for computing the safety index $\phi(\cdot)$ is \emph{available in the observations} of a RL agent, which means that we can learn the safety index transaction from sampled data. % In another word, to obtain the full model knowledge is actually a more difficult task than only to avoid the constraint violations, since these energy function-based algorithms only require the transitions of the safety certificate, or the energy function. 
	In this paper, we proposed the safe set actor critic (SSAC), a model-free RL algorithm aiming to learn a zero constraint violation policy without need of consistently experiencing the dangerous region or prior model. Without dynamics model to construct the safety monitor, we formulate a constrained RL problem with the constraints to dissipate the energy, called the \emph{safe action constraints}. We directly learn the energy function, or safety index in a model-free manner similar to the value function learning. In this way, we can simultaneously learn the safety index and the optimal safe policy. We solve the constrained RL with Lagrangian-based approach, and invoke our prior study to handle the state-dependent constraints with neural multipliers. We add a new proof of convergence to the optimal safe policy. The main contribution of this paper are listed as follows:
	\begin{enumerate}
		\item We propose the safe set actor critic (SSAC) algorithm, which aims to learn a zero constraint-violation policy without sufficient dangerous samples or prior model. We learn the safety-oriented energy function from sampled transitions and enforce the policy to dissipate the energy. The energy function is a prior measurement of safety that allows us to learn a safe policy without consistently violating the safety constraint in the complex environment with unknown dynamic model.
		\item Unlike other studies using the energy function as an additional controller, we formulate the RL problems as a constrained optimization with control safe set constraints. We use the Lagrangian-based approach with a multiplier neural network proposed in our prior research to solve this problem \cite{ma2021feasible}. Additionally, we provide a convergence proof that the learned policy converges to the optimal safe policy, while the safety monitor studies only guarantee local optimality on the action space.
		\item We evaluate the proposed algorithm on both the simulations on the state-of-the-art safe RL benchmarks, Safety Gym and autonomous driving tasks with a hardware-in-loop (HIL) system. Experiment results show that policies learned by the proposed algorithm achieve stable zero constraint violation in all the experimental environments.
	\end{enumerate}
	The paper is organized as follows. Section \ref{sec:2} introduces the related work. Section \ref{sec:3} gives our problem formulation. Section \ref{sec:4} is the practical algorithm and convergence analysis. Section \ref{sec:6} demonstrates the experimental results, and Section \ref{sec:7} concludes the paper. %is the details about how to learn the safety index transaction. Section \ref{sec:5}
	
	\section{Related Work}
	\label{sec:2}
	This section introduces the related work. Existing safe RL studies can be classified into two categories, constrained RL and RL with safe controllers. Section \ref{sec:related1} introduces the studies of constrained RL, and Section \ref{sec:related2} discusses about the RL with safe controllers.
	\subsection{Constrained Reinforcement Learning}
	\label{sec:related1}
	 Most of constrained RL studies take the form of constrained Markov decision process (CMDP), which enforces the constraint satisfaction on the \emph{expectation} of cost function on the trajectory distribution while maximizing the expected return or reward \cite{altman1999constrained,uchibe2007constrained,chow2017risk,achiam2017constrained,ray2019benchmarking,stooke2020responsive,tessler2018reward,ma2021model,yang2021wcsac,peng2021separated,zhang2020first,yang2020projection,duan2019deep,bhatnagar2009natural,ding2020natural,ding2021provably}. As they consider the safety-critical RL problem as a constrained optimization, and various constrained optimization techniques are applied in the constrained RL methods. For example, feasible descent direction \cite{uchibe2007constrained,achiam2017constrained,yang2020projection}, primal-dual optimization \cite{chow2017risk, ray2019benchmarking, stooke2020responsive,zhang2020first,ding2020natural,ding2021provably}, penalty function \cite{tessler2018reward,peng2021separated}. The expectation-based constrained objective function is not applicable for the state-dependent constraint commonly used in the safety-critical control problems, and our prior work propose a neural multiplier to handle the state-dependent safe constraints in RL \cite{ma2021feasible}. However, our prior study still use the value function $v_C^\pi(s)$ for state $s$ under policy $\pi$, or the discounted return of cost function as the constraint objective:
	 \begin{equation}
	     v_C^\pi(\state)=\E_{\tau\sim\pi}\Big\{\sum\nolimits_{t=0}^{\infty}\gamma^t c_t|s_0=s\Big\}
	 \end{equation}
	 The value-function-based constraints still fail to ensure zero violation for the reason stated above. By introducing the control safe set condition as a state-dependent safety constraint, we avoid the sparsity or conservativeness issues, and the proposed algorithm is able to learn a policy with zero constraint violation while accomplishing the task.
	\subsection{RL with Safe Controllers}
	\label{sec:related2}
	RL with safe controllers, or shielding RL means that the policy output is modified or projected to a safe action before execution \cite{cheng2019end, dalal2018safe, fisac2018general, pham2018optlayer,ferlez2020shieldnn}. In some studies, the control safe set is determined by the energy-based methods, for example, the control barrier function \cite{Agrawal2017a}, the safe set algorithm \cite{liu2014control}. These energy-function-based methods can provide provable safety guarantee by enforcing the \emph{forward invariance} of at least a subset of the state safe set \cite{ames2019control,zhao2021modelfree}. Nevertheless, it is difficult to find the safe projection, for example, projecting the output to a safe set on the action space \cite{cheng2019end}, or adding a safe layer to the policy network \cite{dalal2018safe}. However, to obtain the safe action and to prove forward invariance of the state safe set pose limitations on the model formulation, for example, it usually requires explicit knowledge \cite{cheng2019end}, kinematic bicycle model \cite{fisac2018general} or linear property \cite{dalal2018safe}. A sample-based safe set boundary estimation is proposed for black-box model \cite{zhao2021modelfree}, but it requires a digital-twin simulator of the environment. The novelties of our approach to handle energy function are (1) We learn the transition model of energy function similar like the Q-function learning in a model-free manner, so we no longer rely on explicit or implicit dynamics model; (2) We formulate constrained RL problems to learn constrained optimal \emph{policies} rather than locally optimal safe \emph{actions}. % Then the value function can be used as the constraint objective functions \red{as done in} our previous study  \cite{ma2021feasible}.
%	\subsection{Learning-based Safety Certificate}
%	Similar to the Lyapunov function in the stability analysis, it is difficult to synthesize the safety certificate even with a known dynamic system. Some recent studies focus on the learning-based safety certificate including learning the energy function, or the barrier certificate \cite{saveriano2019learning, srinivasan2020synthesis}. Saveriano uses supervised learning to learn a barrier function for the robotics motion planning, but it has a linearity assumption of the barrier function formulation \cite{saveriano2019learning}. The supported vector machine is also considered as the module to learn a barrier function, but it requires a large amount of data labeled as safe or unsafe by human \cite{srinivasan2020synthesis}.  
	
%	\subsection{Relation with Previous Work}
%	In our previous work, we proposed the feasible actor critic (FAC) algorithm, which is able to solve the problem with state-dependent constraints by introducing a neural network approximation of Lagrange multiplliers. We claim that the not all state are possibly feasible, 
	\section{Problem Formulation}
	\label{sec:3}
	In this section, we first describe the notion, then discuss the problem formulation of reinforcement learning with the safe set safety constraints.
	\subsection{Notion}
	We consider the constrained reinforcement learning (RL) problems wherein an agent interacts with its environment with safety constraints. This problem can be formulated a Markov decision progress (MDP), defined by the tuple $(\mathcal{S}, \mathcal{A}, \mathcal{R}, \mathcal{C}, \mathcal{F})$. The state space $\mathcal{S}$ and action space $\mathcal{A}$ are set to be continuous, and the reward function $r: \mathcal{S} \times \mathcal{A} \rightarrow \mathcal{R}$ maps a state-action pair to the rewards $r(s_t, a_t)$. Notably, the cost function $c: \mathcal{S} \times \mathcal{A} \rightarrow \mathcal{C}$ also maps a state-action pair to a cost function $c(s_t, a_t)$. It has a similar formulation with the reward function, which is used to formulate the constraint objective function later. For simplicity, and state-action pair in the next time step of $(\stp, a_{t+1})$ is denoted as $(s',a')$. $\mathcal{F}$ represents the unknown environment dynamics for state transition. We assume a deterministic environment dynamics (the deterministic assumption is common and reasonable in safe control problems). We consider that the agent starts from an initial state distribution $\initdist_{i}(s)$. The initial state set $\mathcal{I}=\{s|\initdist_i(s)>0\}\subseteq\mathcal{S}$ is the set of all possible state in the state distribution, where $f_i(s)$ is the probability density function for the initial state distribution. The parameterized function is represented by function with right subscripts, for example, the policy $\pi_\theta(\cdot)$ with parameter $\theta$.
	\subsection{Safe Set Algorithm}
	 The safe set algorithm (SSA) is a closely related design techniques for the safety-oriented energy function \cite{liu2014control} $\phi:\mathbb{R}^n\to\mathbb{R}$. The energy function, or the safety index are designed to satisfy (1) low energy states are safe and (2) there always exist actions to dissipate the energy. For a given energy function, the safe set algorithm indicates that the energy function can not increase rapidly, or exceed zero. The discrete-time constraints are:
	\begin{equation}
		\phi(s')<\max \{\phi(s)-\eta, 0\}
		\label{eq:ssacstr1}
	\end{equation}
	where the hyperparameter $\eta>0$ is a slack variable to confine how fast the energy function decreases. 
	\begin{definition}[Control Safe Set]
	The safe set on the action space is defined by 
		\begin{equation}
			\mathcal{U}_{S}^{D}(s):=\{a\in \mathcal{A} \mid \phi(s')<\max \{\phi(s)-\eta, 0\}\}
			\label{eq:safeset}
		\end{equation}
	\end{definition}
	We explain how to construct a safety index with the case of collision avoidance safety constraint, which is one of the most common safety constraints in real-world tasks. The safety measurement to define which states are safe is called the \emph{original safety index $\phi_0$. As for collision avoidance, the \emph{original safety index} is $\phi_0=d_{\text{min}}-d$, where $d$ denotes the distance between the agent and obstacle, $d_{\min}$ is the safe distance. 
	\begin{definition}[State Safe Set]
	The safe set on the state space includes state satisfying the original safety index:
	\begin{equation}
	    \mathcal{S}_{S}:=\{s\in \mathcal{S} \mid \phi_0(s)<0\}
	\end{equation}
	\end{definition}
	However, we may not guarantee all states in the state safe set to be safe persistently, which is also defined as the \emph{forward invariance} of the safe state set. One reason is that the input-to-constraint relation is high-order ($\frac{\partial\phi_0}{\partial u}=0$). In the collision avoidance case, if the input is some type of force like traction, the position constraint with force input is second-order. This high-order property would cause that there exist some states that would inevitably go unsafe, and we must exclude them and focus on a \emph{subset} of the state safe set. The commonly used form was proposed as adding high-order derivatives to the safety index.} The general parameterization rules for safety index is $\phi=\phi_0+k_1\phi_0^{'}+\cdots+k_n\phi_0^{(n)}$, where $\phi_0^{(n)}$ is the $n$-order derivative of $\phi_0$ \cite{liu2014control}. In this paper, we adopt a recent modified form of parameterization for the collision avoidance safety constraint in \cite{zhao2021modelfree}:
	\begin{equation}
		\phi(s)=\sigma+d_{\min }^{n}-d^{n}-k \dot{d}
		\label{eq:sis}
	\end{equation}
	where the high-order derivative term,  $\dot d$ is the derivative of distance with respect to time. % The time derivative can also be computed by the relative velocity between the agent and obstacle. 
	$\sigma,n,k>0$ is the parameters to tune. It is proved in \cite{zhao2021modelfree} that if the control safe set $\mathcal{U}_S^D(s)$ is always nonempty in $\mathcal{I}$, we can conclude that agent start from the initial set $\mathcal{I}$ would converge into a \emph{subset of state safe set $\mathcal{I}_s=\{s|\phi(s)\leq0\}\cap \{s|\phi_0(s)\leq0\}$, and  $\mathcal{I}_s$ is controlled forward invariant.} Therefore, we conduct the feasibility assumption:
		\begin{assumption}[Feasibility]
		The control safe set is nonempty for $\forall s \in \mathcal{I}$ where $\phi(s)$ is the safety index function of state $s$.
		\label{assumption:feasi}
	\end{assumption}
	In practical implementation, we follow the design rule of parameters $\sigma,n,k$ that the control safe set is nonempty all over the state space in \cite{zhao2021modelfree} to satisfy Assumption \ref{assumption:feasi}.
	
	\subsection{Reinforcement Learning with Control Safe Set Constraint}
	We consider a constrained reinforcement learning problem that maximizes the expected return while ensuring the control safe set constraint \eqref{eq:ssacstr1} on all its possible initial states:
	\begin{equation}
		\begin{aligned}
		    \max_{\pi}\ &J\big(\pi\big) =  \E_{\tau\sim\pi}\Big\{\sum\nolimits_{t=0}^{\infty}\gamma^t r_t\Big\} = -\E_\{\state\sim \initdist_i(\state)\}v^\pi(\state)
		\\
		\text{s.t.} \   &\phi(\state') - \max \{\phi(\state)-\eta, 0\}\leq 0, \forall s \in \mathcal{I}
		\end{aligned}
		\label{eq:statewiseop}
	\end{equation}
	where $v^\pi(\state)=\E_{\tau\sim\pi}\Big\{\sum\nolimits_{t=0}^{\infty}\gamma^t r_t|s_0=s\Big\}$ is the value function of state $\state$. 
	\begin{proposition}
		If $\pi^*$ is a feasible solution of RL problem \eqref{eq:statewiseop} with constraint \eqref{eq:cstrphi0}, then $\pi^*$ will not cause any constraint violation under Assumption \ref{assumption:feasi} if starting with a safe state $s\in\mathcal{I}_s$.
 	\end{proposition}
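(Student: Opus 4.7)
The plan is to prove that any feasible $\pi^*$ renders $\mathcal{I}_s$ forward invariant, so that a trajectory generated by $\pi^*$ with $s_0 \in \mathcal{I}_s$ satisfies $s_t \in \mathcal{I}_s$ for all $t \geq 0$ and, in particular, $\phi_0(s_t) \leq 0$ throughout, which is exactly the no-violation statement. I would proceed by induction on the time step $t$. The base case $t=0$ is immediate from $s_0 \in \mathcal{I}_s$, since by definition this means $\phi_0(s_0) \leq 0$ and $\phi(s_0) \leq 0$.

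For the inductive step, suppose $s_t \in \mathcal{I}_s$, so in particular $\phi(s_t) \leq 0$. Because $\pi^*$ is feasible for \eqref{eq:statewiseop}, the safety constraint $\phi(s_{t+1}) - \max\{\phi(s_t)-\eta,0\} \leq 0$ holds along the deterministic closed-loop trajectory. Since $\phi(s_t) \leq 0$ and $\eta > 0$ imply $\phi(s_t)-\eta < 0$, the right-hand side collapses to $0$, so I obtain $\phi(s_{t+1}) \leq 0$. To upgrade this scalar bound to the full set membership $s_{t+1} \in \mathcal{I}_s$, which additionally requires $\phi_0(s_{t+1}) \leq 0$, I would invoke the forward invariance property of $\mathcal{I}_s$ under the parameterization \eqref{eq:sis} established in \cite{zhao2021modelfree}: once the design rule for $\sigma, n, k$ is satisfied, a state in $\mathcal{I}_s$ whose successor satisfies the dissipation condition must itself lie in $\mathcal{I}_s$. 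This closes the induction, and iterating over $t$ yields $\phi_0(s_t) \leq 0$ for every $t$.

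The main obstacle is bridging from the scalar dissipation condition $\phi(s_{t+1}) \leq 0$ to the two-part set membership $s_{t+1} \in \mathcal{I}_s$, because the SSA-style constraint by itself only controls $\phi$, while the physical safety requirement is encoded via $\phi_0$; this gap is exactly what the cited theorem of \cite{zhao2021modelfree} fills for the specific high-order index \eqref{eq:sis}, so the substantive content of the proof is really to verify that the closed-loop trajectory of a feasible $\pi^*$ instantiates the hypotheses of that theorem pointwise. A secondary subtlety is that Assumption \ref{assumption:feasi} is stated only on the initial set $\mathcal{I}$, whereas the reachable set under $\pi^*$ may be strictly larger; this is resolved by the remark following Assumption \ref{assumption:feasi} noting that the design parameters are chosen in practice to keep the control safe set nonempty across the entire state space, so feasibility of $\pi^*$ propagates at every step without a boundary edge case.
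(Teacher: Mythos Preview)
Your proposal is correct and follows the same route as the paper: both arguments reduce the claim to the controlled forward invariance of $\mathcal{I}_s$ established in \cite{zhao2021modelfree}, with the paper simply asserting this in one sentence while you unpack it as an explicit induction on $t$ and flag the two genuine subtleties (the $\phi$-to-$\phi_0$ bridge and the $\mathcal{I}$-versus-reachable-set issue) that the paper leaves implicit.
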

 	The proposition holds since $\pi^*$ is a control law that guarantees the forward invariance of the subset of state safe set $\mathcal{I}_s$. The state will never leave $\mathcal{I}_s$ to be unsafe.
	\section{Safe Set Actor Critic}
	\label{sec:4}
	In this section, we explain the solution methods of \eqref{eq:statewiseop} and practical algorithm. Section \ref{sec:learn1} introduce the Lagrangian-based approach with neural multipliers. Section \ref{sec:learn2} discusses about how we learn the transition model of safety index as a state-action value function. Section \ref{sec:algo1} demonstrates the practical algorithm and gradient computation, and Section \ref{sec:algo2} gives the convergence proof of the proposed safe set actor critic (SSAC) algorithm.
    \subsection{Lagrangian-based Solution}
    \label{sec:learn1}
    The major difficulty of solving problem \eqref{eq:statewiseop} is the number of constraints are infinite for continuous state space. Our prior work \cite{ma2021feasible} use the Lagrangian-based approach with \emph{neural multipliers} to approximate a mapping from states to multipliers to handle the problem (each state has a corresponding multiplier for the state-dependent control safe set constraints), denoted by the mapping $\lambda(s): \mathbb{R}^n\to\mathbb{R}$. The Lagrange function of \eqref{eq:statewiseop} is 
	\begin{equation}
		\begin{aligned}
		    L\big(\pi,\lambda\big) =& -\E_{\state\sim \initdist_i(\state)}v^\pi(\state) \\
		    &+ \sum\nolimits_{\state \in \mathcal{I}} \lambda(\state)\Big(\phi(\state') - \max \{\phi(\state)-\eta, 0\}\Big)
		\end{aligned}
		\label{eq:SL1}
	\end{equation}
	We take the negative of the reward part since standard Lagrangian function computes the minimum the optimization objective. The mapping $\lambda(s)$ can be approximated as neural network $\lambda(s; \xi)$ with parameters $\xi$. The sum over state in \eqref{eq:SL1} is unavailable since the state space is continuous. Our prior work \cite{ma2021feasible} solve this problem with an equivalent loss function
	\begin{equation}
	\begin{aligned}
	    	\bar{L}(\pi,\lambda) = \E_{\state\sim \initdist_i(\state)}\big\{&-v^{\pi}(\state) + \\
	    	&\lambda(\state)\big(\phi(\state') - \max \{\phi(\state)-\eta, 0\}\big)\big\}
	\end{aligned}
		\label{eq:SL2}
	\end{equation}
	We have proved the equivalence between two function: \eqref{eq:SL1} and \eqref{eq:SL2} in our prior work \cite{ma2021feasible}. Intuitively, the loss function \eqref{eq:SL2} can be regarded as a Lagrange function with reshaped constraints
	\begin{equation}
		\initdist_i(s)\left(\phi(\state') - \max \{\phi(\state)-\eta, 0\}\right)\leq 0
		\label{eq:reshapecstr}
	\end{equation}
	If the state lies in the initial state set, $\initdist_i(s)>0$ and the constraint \eqref{eq:reshapecstr} is equal to the original constraint. Otherwise, $\initdist_i(s)=0$, it means that those states are outside the initial state set, so their safety constraints are not considered. 
	\subsection{Learn the Safety Index Transition}
	\label{sec:learn2}
	The control safe set constraint \eqref{eq:ssacstr1} consists $\phi(s')$ which we need to use \emph{model} to predict. Related studies mostly use explicit model (both prior model or learning-calibrated model) to compute $\phi(s')$. We directly learn the transition of safety index, or the LHS of \eqref{eq:ssacstr1}. This learning is similar to learning a state-action value function with a zero discounted factor, or focusing on only the energy function transition on the instant step:
	\begin{equation}
		q^\pi_c(\state, a) \doteq \phi(\state') - \max \{\phi(\state)-\eta, 0\}
		\label{eq:cstrphi0}
	\end{equation}
    For consistency with the form of value function learning, we also define the RHS of \eqref{eq:cstrphi0} as the cost function:
    \begin{equation}
		c(\state, a) \doteq \phi(\state) - \max \{\phi(\state)-\eta, 0\}
		\label{eq:cost}
	\end{equation}
	It is easy to modify existing RL algorithm where $q^\pi_c(s, a)=\E_{\tau\sim\pi}\Big\{\sum\nolimits_{t=0}^{\infty}\gamma_c^t c_t|s_0=s,a_0=a\Big\}$ to learn safety index transition by setting the corresponding discounted factor $\gamma_c$ as zero. Similar to learn the Q-function in RL, we would use a neural network $Q_{w_c}(s, a)$ to approximate $q^\pi_c(s, a)$, which is discussed in the following section.
	
    We claim that the cost function \eqref{eq:cost} can be obtained in the transitions $(s, s')$ from samples in RL. Under the MDP formulation, these information should be contained in the observations of RL. Otherwise, the observation would not be enough to describe all the characteristics of the environment, which does not satisfy the requirement of a Markov decision progress. Take the case of collision avoidance of an autonomous vehicle, the distance relative speed to other traffic participants can be either directly measured by the Lidar or predicted by the perception module of the autonomous driving system. If some information is missing (for example, we do not know the speed of the surrounding vehicles), using current observation to make decision is inappropriate and dangerous, which is beyond our discussion. For example, according to the parameterization \eqref{eq:sis}, the required information for computing the safety index is $(d, \dot d, d', \dot d')$. 
%     After obtaining the necessary information from the observation, we can learn the constraint state-action value function, also the constrained objective in \eqref{eq:statewiseop} to be
% 	\begin{equation}
% 	\red{j^\pi_c(s_t, a_t)} \doteq \phi(\stp) - \max \{\phi(\st)-\eta, 0\}
% 		\label{eq:cstrphi}
% 	\end{equation}

% 	\label{sec:5}
	\subsection{Practical Algorithm}
	\label{sec:algo1}
	
	We develop practical implementation of SSAC based on the state-or-the-art off-policy maximum entropy RL framework. Compared to  unconstrained soft actor-critic \cite{haarnoja2018soft,haarnoja2018softb}, we learn two extra neural networks, a state-action value function representing the change of safety index $Q_{w_c}(s, a)$, and a statewise Lagrange multiplier $\Lambda_\xi(s)$. The detailed algorithm is demonstrated in Algorithm~1.
%	\begin{table}[h]
%		\centering
%		\begin{tabular}{cccc}
%			\hline
%			Function name & Function type & Notions & Weights \\
%			\hline
%			Value function & soft Q-function & $Q_\qparas(\st,\at)$ & $\qparas$ \\
%			Cost value function & Q-function & $Q_{\qparas_C}(\st,\at)$ & $\qparas_C$ \\
%			Policy & stochastic policy with $tanh$ bijector & $\pi_\theta(\st)$ & $\piparas$ \\
%			Multipliers & multiplier function & $\lambda_\xi(\st)$ & $\xi$ \\
%			\hline
%		\end{tabular}
%		\caption{Parameterized Function}
%		\label{tab:func}
%	\end{table}

		\begin{algorithm}[htb]
		\caption{Safe Set Actor-Critic}
		\label{alg:factotal}
		\begin{algorithmic}
			\Require $\qparas_1$, $\qparas_2$, $\qparas_C$, $\piparas$,  $\xi$.\Comment{Initial parameters}
			\State $\bar\diamondsuit$ $\leftarrow$ $\diamondsuit$ for $\diamondsuit\in\{$$\qparas_1$, $\qparas_2$, $\qparas_C$, $\piparas\}$ \Comment{Initialize target network weights}
			\State $\mathcal{B}\leftarrow\emptyset$ \Comment{Initialize an empty replay buffer}
			\For{each iteration}
			\For{each environment step}
			\State $\at \sim \pi_\piparas(\at|\st), \stp \sim \pdyn(\stp| \st, \at)$ 
			\Comment{Sample transitions}
			\State Compute $c(s_t, a_t)$ according to \eqref{eq:cost}
			\State $\mathcal{B} \leftarrow \mathcal{B} \cup \left\{(\st, \at, \reward(\st, \at), , c(\st, \at), \phi(s_t), \stp)\right\}$
			% \Comment{Store the transition in the replay buffer}
			\EndFor
			\For{each gradient step}
			\State $\qparas_i \leftarrow \qparas - \beta_Q \hat \nabla_{\qparas_i} J_Q(\qparas_i)$ for $i \ \in \{1,2\}$\Comment{Update the Q-function weights}
			\State $\qparas_c \leftarrow \qparas - \beta_Q \hat \nabla_{\qparas_i} J_{C}(\qparas_c)$\Comment{Update the safety index transition model}
			\If{gradient steps \texttt{mod} $m_\pi$ $=0$}
			\State $\piparas \leftarrow \piparas - \beta_\policy \hat \nabla_\piparas J_\policy(\piparas)$\Comment{Update policy weights}
			\State $\alpha \leftarrow \alpha - \beta_\alpha \hat \nabla_\alpha J_{\alpha}(\alpha)$ \Comment{Adjust temperature}
			\EndIf
			\If{gradient steps \texttt{mod} $m_\lambda$ $=0$}
			\State $\lamparas \leftarrow \lamparas + \beta_\lambda \hat \nabla_\lamparas J_\lambda(\lamparas)$\Comment{Update multipliers weights}
			\EndIf
			\If{parameters are outside the compact domain}
			\State Project the parameter into the domain
			\EndIf
			\State $\bar\diamondsuit$ $\leftarrow$ $\tau\diamondsuit+(1-\tau)\bar\diamondsuit$ for $\diamondsuit\in\{\qparas_1$, $\qparas_2$, $\qparas_C$, $\piparas\}$ \Comment{Update target network weights}
			\EndFor
			\EndFor
			\Ensure $\qparas_1$, $\qparas_2$, $\qparas_C$, $\piparas$,  $\xi$.
		\end{algorithmic}
	\end{algorithm}
	
% 	We need to train 4 types of parameterized functions: (1) two soft Q-function, $Q_{w_{*}}(s, a)$ where $*\in\{1,2\}$; (2) the value function of safety index transition $Q_{w_c}(s, a)$; (3) the stochastic policy $\pi_\theta(s)$; (4) the multipliers $\lambda_\xi(s)$. 
	In the training process, the loss and gradient computation of the soft Q-function $Q_{w_{*}}(s, a)$ are exactly the same as those in soft actor-critic\cite{haarnoja2018soft,haarnoja2018softb}. The loss of training the safety index transition is to learn a model to approximate how the value of safety index changes when the state transition occurs. The loss is formed similar with the state-action value function learning with a zero discounted factor (only focusing on the instant step):
	\begin{equation}
	\begin{small}
        \begin{aligned}
	     &J_{C}(w_c)\\ =&\mathbb{E}_{\left(\st, \at\right) \sim \mathcal{B}}\bigg\{\frac{1}{2}\Big(Q_{w_c}(s_t, a_t) - c(s_t,a_t)\Big)^{2}\bigg\}\\
	     =&\mathbb{E}_{\left(\st, \at\right) \sim \mathcal{B}}\bigg\{\frac{1}{2}\Big(Q_{w_c}(s_t, a_t) - (\phi(\stp) - \max \{\phi(\st)-\eta, 0\})\Big)^{2}\bigg\}
        \end{aligned}
        \end{small}
	\end{equation}
	
% 	 and the gradient with respect to the parameters
% 	 \begin{equation}
% 	 	\begin{aligned}
% 	 		J_{C}(w_c)= &\mathbb{E}_{\left(\st, \at\right) \sim \mathcal{B}}\bigg\{\frac{1}{2}\Big(j^\pi_{w_c}(s_t, a_t) -\\
% 	 		& c(s_t,a_t)\Big)^{2}\bigg\}
% 	 	\end{aligned}
% 	 \end{equation}
 	The stochastic gradient of parameter $w_c$ is 
 	\begin{equation}
		\hat{\nabla}J_{C}(w_c)=\nabla_{w_c}Q_{w_c}(s_t, a_t)(Q_{w_c}(s_t, a_t) 
	    	- c(s_t,a_t))
 	\end{equation}
 	The objective function of updating the policy and multipliers are the Lagrange function in \eqref{eq:SL2}. In practical implementations, we actually use $Q_{w_c}(s, a)$ to approximate the LHS of inequality constraints in  \eqref{eq:statewiseop}. Using the framework of maximum entropy RL, the objective function of policy update is:
 	\begin{equation}
 		\begin{aligned}
 			J_{\pi}(\piparas)=&\mathbb{E}_{\st \sim \mathcal{B}}\bigg\{\mathbb{E}_{\at \sim \pi_{\piparas}}\Big\{\alpha \log \big(\pi_{\piparas}\left(\at \mid \st\right)\big)-\\
 			&Q_{\qparas}(\st, \at) + \Lambda_\xi(\st) Q_{w_c}(s_t, a_t) \Big\}\bigg\}
 			\label{eq: losspi}
 		\end{aligned}
 	\end{equation}
 	where $\mathcal{B}$ is the state distribution in the replay buffer. The policy gradient with the reparameterized policy $\at=f_\piparas(\epsilon_t; \st)$ can be approximated by:
 	\begin{small}
 		\begin{equation*}
 			\begin{aligned}
 				\hat{\nabla}_{\piparas} J_{\pi}(\piparas)=&\nabla_{\piparas} \alpha \log \big(\pi_{\piparas}\left(\at \mid \st\right)\big)+\Big(\nabla_{\at} \alpha \log \left(\pi_{\piparas}\left(\at \mid \st\right)\right)\\
 				&-\nabla_{\at} \big(Q_\qparas\left(\st, \at\right) - \Lambda_\xi(\st) Q_{w_c}(s_t, a_t) \big)\Big) \nabla_{\piparas} f_{\piparas}\left(\epsilon_{t} ; \st\right)
 			\end{aligned}
 		\end{equation*}
 	\end{small} 
 	Neglecting those irrelevant parts, the objective function of updating the multiplier network parameters $\xi$ is 
 	\begin{equation*}
 		J_{\lambda}(\xi) = \mathbb{E}_{\st \sim \mathcal{B}}\bigg\{\mathbb{E}_{\at \sim \pi_{\piparas}}\Big\{\Lambda_\xi(\st)\big( Q_{\qparas_c}(\st,\at)\big)\Big\}\bigg\}
 	\end{equation*}
 	The stochastic gradient of the multiplier network parameters $\xi$ is
 	\begin{equation}
 		\hat{\nabla}J_\lambda(\lamparas) = Q_{w_c}(s_t, a_t)  \nabla_\lamparas\Lambda_\lamparas(\st)
 		\label{eq:lamsubgrad}
 	\end{equation}
 	
 	We invoke a multiple delayed update mechanism which update policy and multiplier each $m_\pi$ or $m_\lambda$ step, respectively. This delayed mechanism helps to stabilize the training process explained on the level of tricks \cite{fujimoto2018addressing}. Furthermore, in the following theoretical convergence analysis, we assume that the learning rate of multiplier net is much smaller than the policy learning rate. This delayed update mechanism can also be regarded as a practical implementation of Assumption \ref{assumption:1}.
 	\subsection{Convergence Outline}
 	\label{sec:algo2}
    The convergence of maximum entropy RL relies on the soft policy iteration \cite{haarnoja2018soft}. As for SSAC, the soft policy evaluation part is the same as the unconstrained soft actor-critic, which means that the Q-function $Q_w(s, a)$ and the safety index transition $Q_{w_c}(s, a)$ will converge to their real state-action value function $q^\pi(s,a), q^\pi_c(s,a)$.
%  	\begin{lemma}[Soft policy evaluation]
%  		The parameters of Q-function $Q_w(s, a)$ and the safety index transition $Q_{w_c}(s, a)$ converges almost surely to the optimal parameters $w^*, w_c^*$.
%  	\end{lemma} 
 	Therefore, we focus on the soft policy improvement. First, we give some necessary assumptions:
 	\label{sec:51}
 	\begin{assumption}
 	\label{assumption:1}
 		The learning rate schedules, $\{\beta_\theta(k), \beta_\xi(k)\}$, satisfy
 		\begin{equation}
 			\begin{gathered}
 				\sum_{k} \beta_\theta(k)=\sum_{k} \beta_\xi(k)=\infty \\
 				\sum_{k} \beta_\theta(k)^{2}, \quad \sum_{k} \beta_\xi(k)^{2}<\infty \\
 				\beta_\theta(k)=o\left(\beta_\xi(k)\right) .
 			\end{gathered}
 		\end{equation}
 	\end{assumption}
 	\begin{assumption}{(Differentiablity)}
 		$\theta \in \Theta,\ \xi\in\Xi,\ \zeta\in\mathcal{Z}$ which are all compact sets. All the neural networks approximation are Lipschitz continuous.
 	\end{assumption}
 	We use an ordinary differential equation (ODE) viewpoint to prove the convergence, which is a branch of standard convergence analysis of stochastic optimization algorithms \cite{borkar2009stochastic, bhatnagar2009natural, bhatnagar2012online, chow2017risk}.  Take the case of $\theta$ update rule in Algorithm~1:
 	\begin{equation}
 		\piparas \leftarrow \piparas - \beta_\theta \hat \nabla_\piparas J_\policy(\piparas)
 	\end{equation}
 	It can be regarded as a stochastic approximation of a continuous dynamic system $\theta(t)$:
 	\begin{equation}
 		\dot \theta = -\nabla_\theta J_\policy(\theta)
 		\label{eq:ode}
 	\end{equation}
 	and we will prove that dynamic system $\theta(t)$ will converge to a local optimum $\theta^*$, which is the the solution of the ODE \eqref{eq:ode} and also a stationary point.
 	\begin{theorem}
 		Under all the aforementioned assumptions, the sequence of policy and multiplier parameters tuple $(\theta_k, \xi_k)$ converge almost surely to a locally optimal policy and multiplier parameters tuple $(\theta^*, \xi^*)$ as $k$ goes to infinity.
 		\label{theorem:major}
 	\end{theorem}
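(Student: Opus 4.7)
The plan is to apply the two-timescale stochastic approximation framework of Borkar, treating the policy and multiplier updates as coupled stochastic recursions whose asymptotic behaviour is captured by a pair of coupled ODEs. The learning-rate assumption places the multiplier on the slower timescale and the policy on the faster timescale, so the analysis decomposes into an inner $\theta$-ODE with $\xi$ held quasi-static and an outer $\xi$-ODE driven by the equilibrium of the inner one. The overall structure mirrors the convergence proofs for Lagrangian actor-critic methods such as \cite{bhatnagar2012online, chow2017risk}, adapted here to the state-wise neural-multiplier setting.

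First I would invoke the soft policy evaluation step from SAC to conclude that $Q_w(s,a)$ and the safety-index transition network $Q_{w_c}(s,a)$ converge to $q^\pi(s,a)$ and $q^\pi_c(s,a)$ under the current policy, so that the sampled gradients of $J_\pi$ and $J_\lambda$ are asymptotically unbiased estimators of $\nabla_\theta \bar L(\theta,\xi)$ and $\nabla_\xi \bar L(\theta,\xi)$ as defined in \eqref{eq:SL2}. Compactness of $\Theta,\Xi$ together with the Lipschitz continuity of all networks, reinforced by the explicit projection step in Algorithm~\ref{alg:factotal}, gives bounded iterates, martingale-difference noise with bounded variance, and a globally Lipschitz drift. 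Borkar's ODE theorem applied on the fast timescale then shows that $\theta_k$ tracks $\dot\theta = -\nabla_\theta \bar L(\theta,\xi)$ at frozen $\xi$; a Lyapunov argument with $\bar L(\cdot,\xi)$ as the descent function yields almost-sure convergence of $\theta_k$ to a local minimizer $\theta^*(\xi)$ of the inner problem.

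On the slow timescale the multiplier update then sees the policy as though it had already equilibrated at $\theta^*(\xi)$, so $\xi_k$ asymptotically tracks $\dot\xi = \nabla_\xi \bar L(\theta^*(\xi),\xi)$. The feasibility assumption guarantees that for every state in the support of $\initdist_i$ there exists an action satisfying $\phi(s')-\max\{\phi(s)-\eta,0\}\le 0$, so the value of the inner problem is bounded along the slow trajectory and the ascent in $\xi$ converges almost surely to a stationary $\xi^*$. The resulting pair $(\theta^*(\xi^*),\xi^*)$ is a local saddle point of $\bar L$, and by the equivalence between \eqref{eq:SL1} and \eqref{eq:SL2} established in the authors' prior work it corresponds to a locally optimal feasible policy of the constrained problem \eqref{eq:statewiseop}.

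The hard part will be making the state-dependent neural-multiplier argument rigorous: $\Lambda_\xi(s)$ is a function approximator rather than a single scalar dual variable, so the usual finite-dimensional KKT theory does not apply off the shelf, and one has to argue that finite-parameter ascent on $\xi$ recovers a multiplier function whose restriction to the support of $\initdist_i$ acts as a KKT multiplier for the infinite family of state-dependent constraints in \eqref{eq:statewiseop}. A secondary difficulty is that the non-convexity of the networks only yields convergence to stationary points; I would handle this in the standard way by replacing convergence to a single point with convergence to the chain-recurrent set of the limit ODE, so that saddle or limit-cycle behaviour is absorbed into the theorem statement rather than excluded.
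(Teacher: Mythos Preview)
Your proposal is correct and follows essentially the same route as the paper: a two-timescale Borkar analysis with the policy on the fast scale and the neural multiplier on the slow scale, Lyapunov descent/ascent on $\bar L$ for each scale, and a final step identifying the limit as a local saddle point. The paper's execution differs only in technical detail---it carries out the boundary-projection case of the Lyapunov argument explicitly (via Bertsekas' projection inequality), tracks the $\epsilon_\theta$ error term inside the slow-scale Lyapunov derivative, and verifies complementary slackness directly in Step~3---but these are refinements of exactly the skeleton you describe, and your caveats about the neural multiplier and chain-recurrent sets are issues the paper largely glosses over rather than handles differently.
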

 	The theoretical computational graph is shown in Figure \ref{fig:flow}.
 	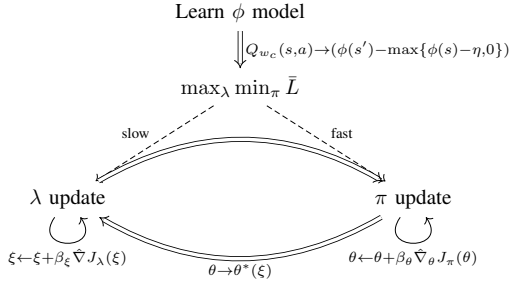
\begin{figure}[ht]
	   % \centering
	    \adjustbox{scale=0.8,center}{
	     \begin{tikzcd}                                                                                                    & \text{\rm Learn}\ \phi\ \text{\rm model} \arrow[d, "{Q_{w_c}(s,a)\to(\phi(s')-\max\{\phi(s)-\eta,0\})}", Rightarrow]       & 
	     \\& \max _{\lambda} \min _{\pi} \bar{L} \arrow[rdd, "\text{\rm fast}", dashed] \arrow[ldd, "\text{\rm slow}"', dashed] &                                                                                                                                                                                                                \\
	     &            &                    \\
	     \lambda\ \text{\rm update} \arrow["\xi\leftarrow \xi+\beta_\xi \hat{\nabla}J_\lambda(\xi)"', loop, distance=2em, in=305, out=235] \arrow[rr, Rightarrow, bend left] &                                                                                                                & \pi\ \text{\rm update} \arrow["\theta\leftarrow \theta+\beta_\theta\hat{\nabla}_{\theta} J_{\pi}(\theta)"', loop, distance=2em, in=305, out=235] \arrow[ll, "\theta \to \theta^*(\xi)", Rightarrow, bend left]
        \end{tikzcd}
	    }
	    \caption{The computational graph for SSAC.}
	    \label{fig:flow}
	\end{figure}
 	Before going detailed into the proof, the high level overview is gives as follows:
 	\begin{enumerate}
 		\item First we show that each update cycle of the multiple timescale discrete stochastic approximation algorithm $(\theta_k, \xi_k)$ converges almost surely, but at different speeds, to the stationary point $(\theta^*, \xi^*)$ of the corresponding continuous time system.
 		\item By Lyapunov analysis, we show that the continuous time system is locally asymptotically stable at $(\theta^*, \xi^*)$.
 		\item We prove that $(\theta^*, \xi^*)$ is locally optimal solution, or a local saddle point for the constrained RL problems.
 	\end{enumerate}
 	The convergence proof mainly follows the multi-timescale convergence and Theorem 2 in Chapter 6 of Borkar's book\cite{borkar2009stochastic}, and also the standard procedure in proving convergence of stochastic programming algorithms \cite{bhatnagar2009natural,bhatnagar2012online,prashanth2013actor}. Detailed proof of Theorem \ref{theorem:major} is provided in Appendix \ref{appendix:proof1}.
 	\section{Experimental Results}
 	\label{sec:6}
 	We demonstrate the effectiveness of proposed algorithm by evaluating them on multiple safety-critical tasks in this section. Section \ref{sec:exp1} demonstrates four safe exploration tasks on the state-of-the-art safe RL benchmark, Safety Gym \cite{ray2019benchmarking}, and Section \ref{sec:exp2} depicts a hardware-in-loop (HIL) experiment of autonomous driving tasks with a real controller module of autonomous vehicle. 
 	\subsection{Safe Exploration Task}
 	\label{sec:exp1}
 	\begin{figure}[ht]
 		\subfigure[Safety Gym]{\includegraphics[width=0.24\linewidth]{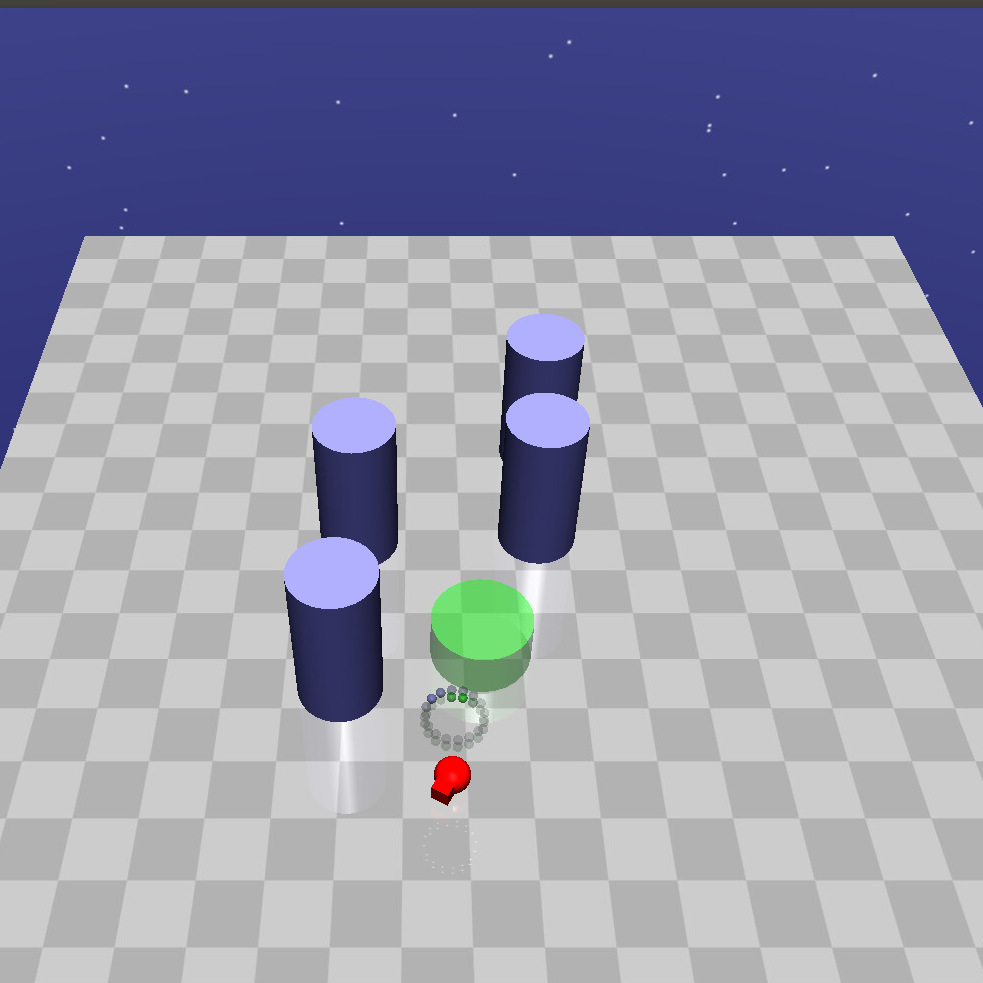}}
 		\subfigure[Point agent]{\includegraphics[width=0.24\linewidth]{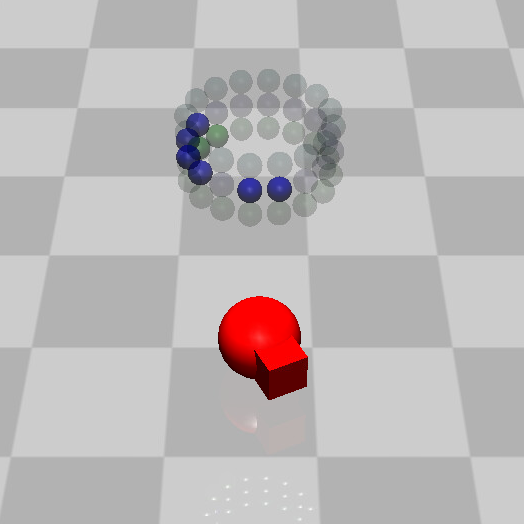}}
 		\subfigure[Hazard]{\includegraphics[width=0.24\linewidth]{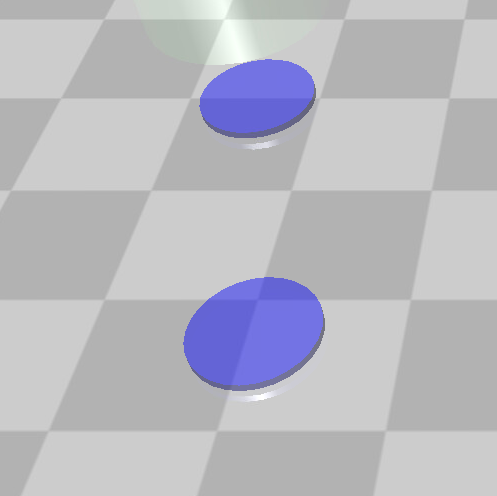}}
 		\subfigure[Pillars]{\includegraphics[width=0.24\linewidth]{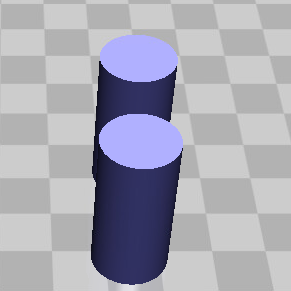}}
 		\caption{Safe exploration environments in Safety Gym. The agent should reach the goal position, the green cylinder in (a), while avoiding other obstacles. % Notably, hazards should be avoided but it is not a solid object.
 		hazards are not solid objects, hence can be crossed over. Pillars are solid and cause physical contact with agents.}
 	\end{figure}
 	\begin{figure*}[h]
 		\centering
 		\includegraphics[width=0.23\linewidth]{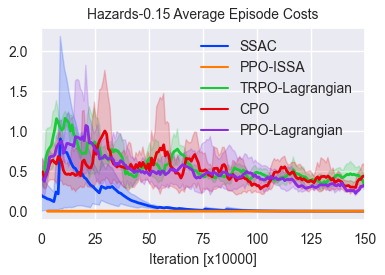}
 		\includegraphics[width=0.23\linewidth]{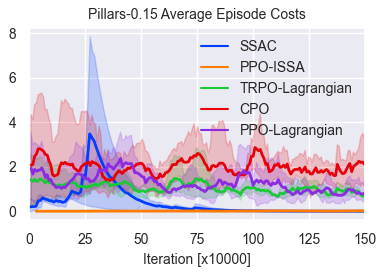}
 		\includegraphics[width=0.23\linewidth]{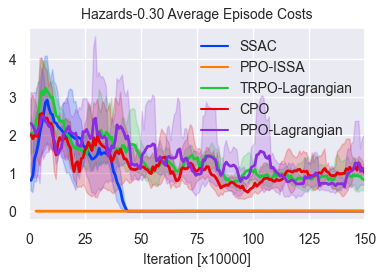}
 		\includegraphics[width=0.23\linewidth]{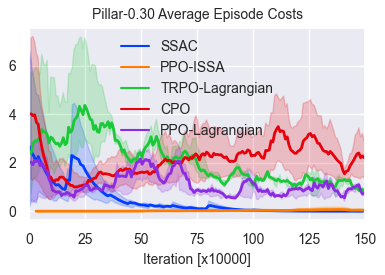}
 		\\
 		\includegraphics[width=0.23\linewidth]{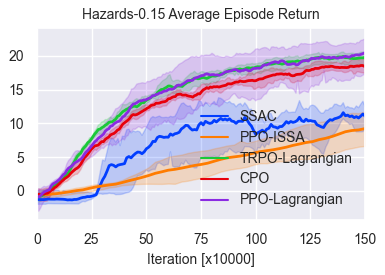}
 		\includegraphics[width=0.23\linewidth]{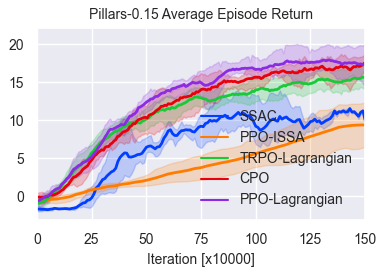}
 		\includegraphics[width=0.23\linewidth]{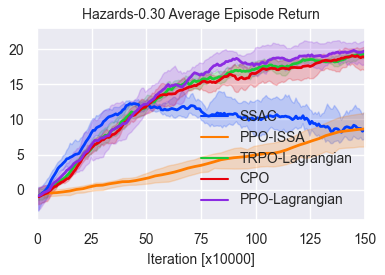}
 		\includegraphics[width=0.23\linewidth]{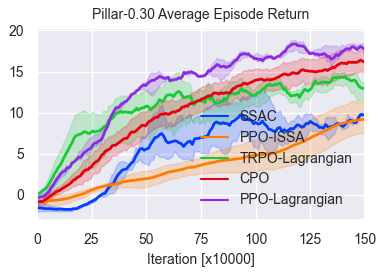}
 		\\
 		\includegraphics[width=0.23\linewidth]{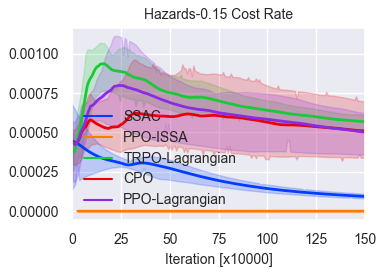}
 		\includegraphics[width=0.23\linewidth]{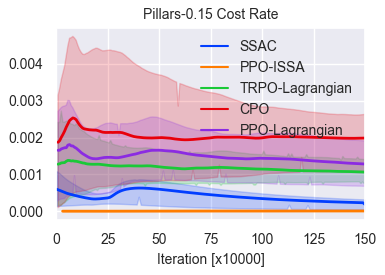}
 		\includegraphics[width=0.23\linewidth]{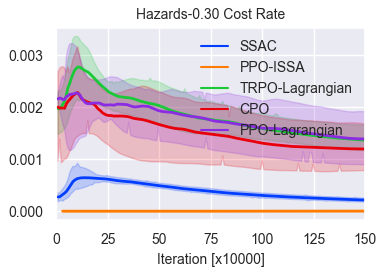}
 		\includegraphics[width=0.23\linewidth]{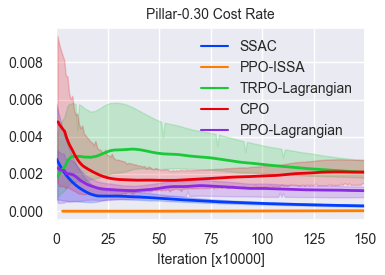}
 		\caption{Learning curves for safe exploration in Safety Gym environments. The x-axis represents the training iterations; and the y-axis represents the respective value in the title of the last 20 episodes. The solid lines represent the mean value over 5 random seeds. The shaded regions represent the 95\% confidence interval. SSAC converges to a \emph{zero constraint-violation} policy with a reasonable performance sacrifice while baseline algorithms all can not achieve zero violation.}
 		\label{fig:bigfig}
 	\end{figure*}
 	\subsubsection{Environment Setup}
 	Safety Gym includes a series of benchmark environments specially designed for safe RL and safe exploration tasks using Gym API and MuJoCo simulator \cite{brockman2016openai,todorov2012mujoco,ray2019benchmarking}. The task of safety gym is to control an agent (We choose the \texttt{Point} agent) to accomplish different types of tasks in a 2D arena. There exists hazards and different kinds of obstacles in the 2D arena with random positions, and the agent must avoid them when reaching the goal. Four environment setups with different types and sizes of the obstacles are considered, named as \texttt{\{Type\}-\{Size\}}. The safety constraint is not to touch any obstacles or reach any hazards while accomplishing the tasks. The parameters we choose for \eqref{eq:sis} satisfies the safety index synthesis rule proposed in \cite{zhao2021modelfree}, which satisfies the feasibility in  Assumption \ref{assumption:feasi}. % ,  that is all the initial states are feasible since the control safe set are always nonempty
 	
 	 We compare the proposed algorithm against different types of baseline algorithms: (1) commonly used constrained RL baseline algorithms: constrained policy optimization (CPO)\cite{achiam2017constrained}, Lagrangian modification version of trust-region policy optimization (TRPO-Lagrangian) and proximal policy optimization (PPO-Lagrangian) \cite{ray2019benchmarking}; (2) A model-free safe monitor which projects the action through an adaptive boundary estimation of control safe set, called the implicit safe set algorithm (PPO-ISSA) \cite{zhao2021modelfree}. As we do not have the dynamics model for Safety Gym, other safe controller baselines can not be implemened. Other experimental details are listed in Appendix \ref{sec:appendixexp}.
     
 	\subsubsection{Simulation Results}
 	The training curves in Safety Gym environments can be found in Figure \ref{fig:bigfig}. Results show that the converged policy of SSAC achieves solid zero constraint violation, while all constrained RL baseline in type (1) show little but existing constraint violations \emph{with a zero constraint threshold}. Furthermore, although we can not guarantee the zero constraint violation during the training progress, policies trained by SSAC only experience a few constraint violations in the initial stage of training, which means our proposed algorithm has the potential to achieve full constraint-avoidance training process given a proper initialization rule of feasible policy. The implicit safe set algorithm (PPO-ISSA) has comparable reward performance while learning slower than SSAC, which shows that learning a integrated policy (optimum on the policy space) and using a external safe monitor (optimum on local action space) nearly has same reward performance. PPO-ISSA has slower learning process, which might be explained by that the initial violations feed SSAC with samples of better reward performance. The total reward of SSAC is not as good as baselines, which is reasonable since some baseline achieves higher reward because they direct cross the hazards instead of avoiding them \cite{achiam2017constrained, ray2019benchmarking}. % many existing studies shows that there exists trade-off between the reward and safety performance  .  In real-world safety-critical tasks, a conservative but solidly safe policy is indeed acceptable.
 	
 	In order to better understand the zero violation property, we count the accumulative cost rate in the training process in Figure \ref{fig:bigfig}. Results reveal that the cost rate of SSAC \emph{converges to zero}, while the curves of baseline algorithms remain at a positive number. The facts verify that baseline algorithms need consistent cost signal to identify danger, while SSAC does not need explorations in dangerous region once converged.
 	\subsection{Autonomous Driving Control Task}
 	\label{sec:exp2}
 	\subsubsection{Environment Setup} The environment part follows our prior work \cite{ma2021model, guan2021integrated} modeling a real intersection located at ($31^{\circ}08'13''$N, $120^{\circ}35'06''$E) in the SUMO network editor  as shown in Figure \ref{fig: scene model}.
 	\begin{figure}
 		\centering
 		\subfigure[Real scene]{\includegraphics[width=0.45\linewidth]{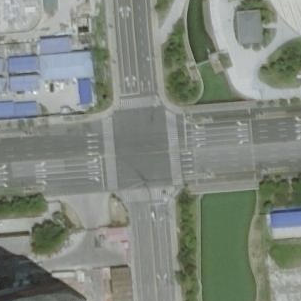}}
 		\subfigure[Virtual scene]{\includegraphics[width=0.45\linewidth]{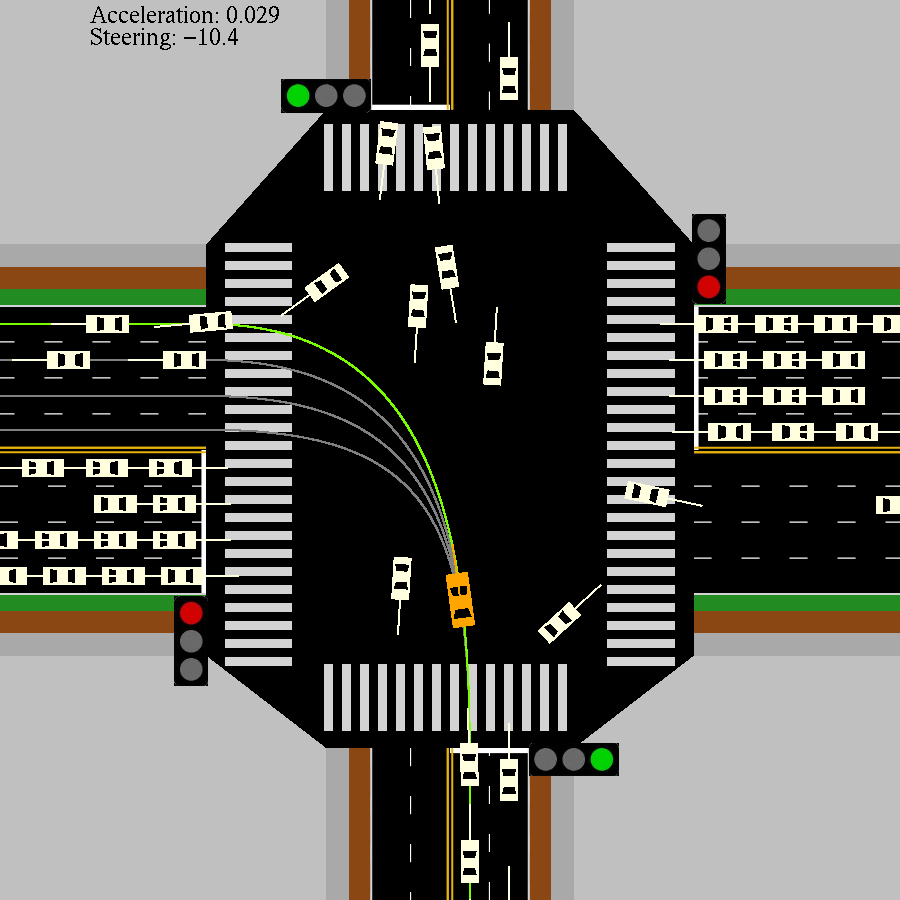}}
 		\label{fig: scene model}
 		\caption{The environment scene of autonomous driving control of turning left at intersection. The ego vehicle (yellow one) needs to track one of the reference paths, or the highlighted one while avoiding collisions with the surrounding vehicles (those white ones).}
 	\end{figure}
 	The random traffic flow is generated by the SUMO traffic simulator. The vector-level observation (not raw data from Lidars) includes three parts, the state of ego vehicle, a series of tracking point from a pre-defined static reference trajectory, and also the state of surrounding vehicles (including positions, velocity and heading angle). Detailed of observations are shown in TABLE \ref{tab:obs}.
 	\begin{table}[h]
 	\centering
 	\caption{State in the intersection environment}
 	\label{tab:obs}
    \begin{tabular}{@{}ccc@{}}
    \toprule
    Observations       & Notions & Meanings                        \\ \midrule
    Ego vehicle        &    $p_x$     & Longitudinal position {[}m{]}   \\
                       &    $p_y$     & Lateral position {[}m{]}        \\
                       &    $\eta$     & Heading angle {[}$\degree${]}         \\
                       &    $v_x$     & Longitudinal velocity {[}m/s{]} \\
                       &    $v_y$     & Lateral velocity {[}m/s{]}      \\
                       &    $r$     & yaw rate {[}rad/s{]}            \\ \midrule
    Surrounding vehicle&    $p_x^i$     & Longitudinal position {[}m{]}   \\
        ($i^\text{th}$)&    $p_y^i$     & Lateral position {[}m{]}        \\
                       &    $v^i$     & velocity {[}m/s{]}              \\
                       &    $\eta^i$     & Heading angle {[}$\degree${]}         \\ \midrule
    Tracking error     &    $\delta_p$     & position error {[}m{]}          \\
                       &    $\delta_\eta$     & Heading angle error {[}$\degree${]}   \\
                       &    $\delta_v$     & velocity error {[}m/s{]}        \\ \bottomrule
    \end{tabular}
    \end{table}
    The control input is desired steering wheel angle $\delta_{\text{des}}$ and acceleration $a_{\text{des}}$ of ego vehicle. The static reference trajectories are generated by bezier curves with several reference points according to the road topology. We selected \emph{focusing surrounding vehicles} in all the surrounding vehicles to handle the variable surrounding vehicles and safety constraints. The surrounding vehicle is filtered and listed in a specific order (first by route, then by lateral position) to avoid the permutation error in state representation. We name them as the \emph{filtered surrounding vehicles} in the sections below. The filtering rule is a fixed number of surrounding vehicles in each route, for example, 4 vehicles turning left from current road. If there are not enough vehicles, we add virtual vehicles in the filtered surrounding vehicles in distant place (outside the intersection) to make up the state dimension without affecting the ego vehicle \cite{guan2021integrated}. Notably, the position, heading angle and speed of both ego vehicle and filtered surrounding vehicles are included in the observations, and we can use them to compute the necessary distance term and its first-order time derivative in \eqref{eq:sis}. We use a kinetics bicycle vehicle model with nonlinear tire model to simulate the ego vehicle response \cite{ge2020numerically}, and the filtered surrounding vehicles are modeled and controlled by SUMO.  The reward is designed with linear combination of squared tracking errors of position and heading angle with respect to the highlighted reference trajectory in Figure \ref{fig: scene model}:
    \begin{equation}
    r(s, a) = -(0.04\delta_p^2+0.01\delta_v^2+0.1\delta_\phi^2 + 0.1\delta^2 + 0.005a_{\text{des}}^2)
\end{equation}
    The safety constraints are considered with a double circle design as shown in Figure \ref{fig:cstr}. For each pair of ego and filtered surrounding vehicle, the 
 	\begin{figure}[h]
 		\centering
 		\includegraphics[width=0.7\linewidth]{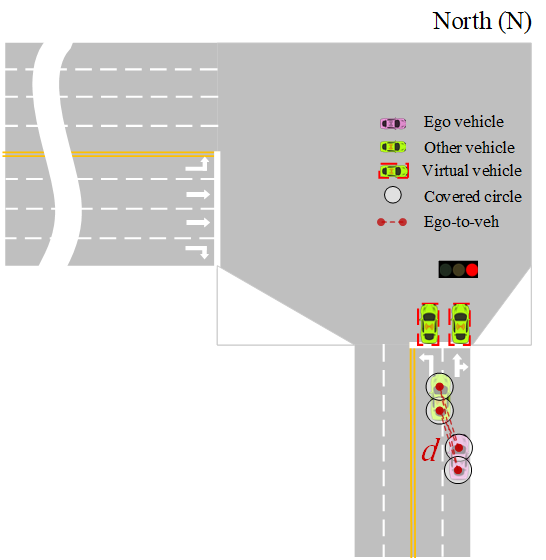}
 		\caption{Collision avoidance constraints by double circle.}
 		\label{fig:cstr}
 	\end{figure}
 	As we have multiple collision avoidance constraints here (4 for each filtered surrounding vehicle), we set the output dimension of multiplier network to match the constraint dimension (4 * number of filtered surrounding vehicles). Other environment setup details and specific weights design can be found in our previous studies \cite{guan2021integrated, ma2021model}.
 	
 	\subsubsection{Hardware-in-Loop Simulation System}
 	We design a hardware-in-loop system to verify the proposed algorithm. The controller is GEAC91S series AI computing unit from TZTEK as shown in Figure \ref{fig:hil}, based on a NVIDIA Jetson AGX Xavier. First, the policy is trained on a personal computer or cloud computing services, then we convert the neural network to a hardware-compatible model and implement it on the real autonomous driving controller.
 	\begin{figure}[h]
 		\centering
 		\includegraphics[width=0.8\linewidth]{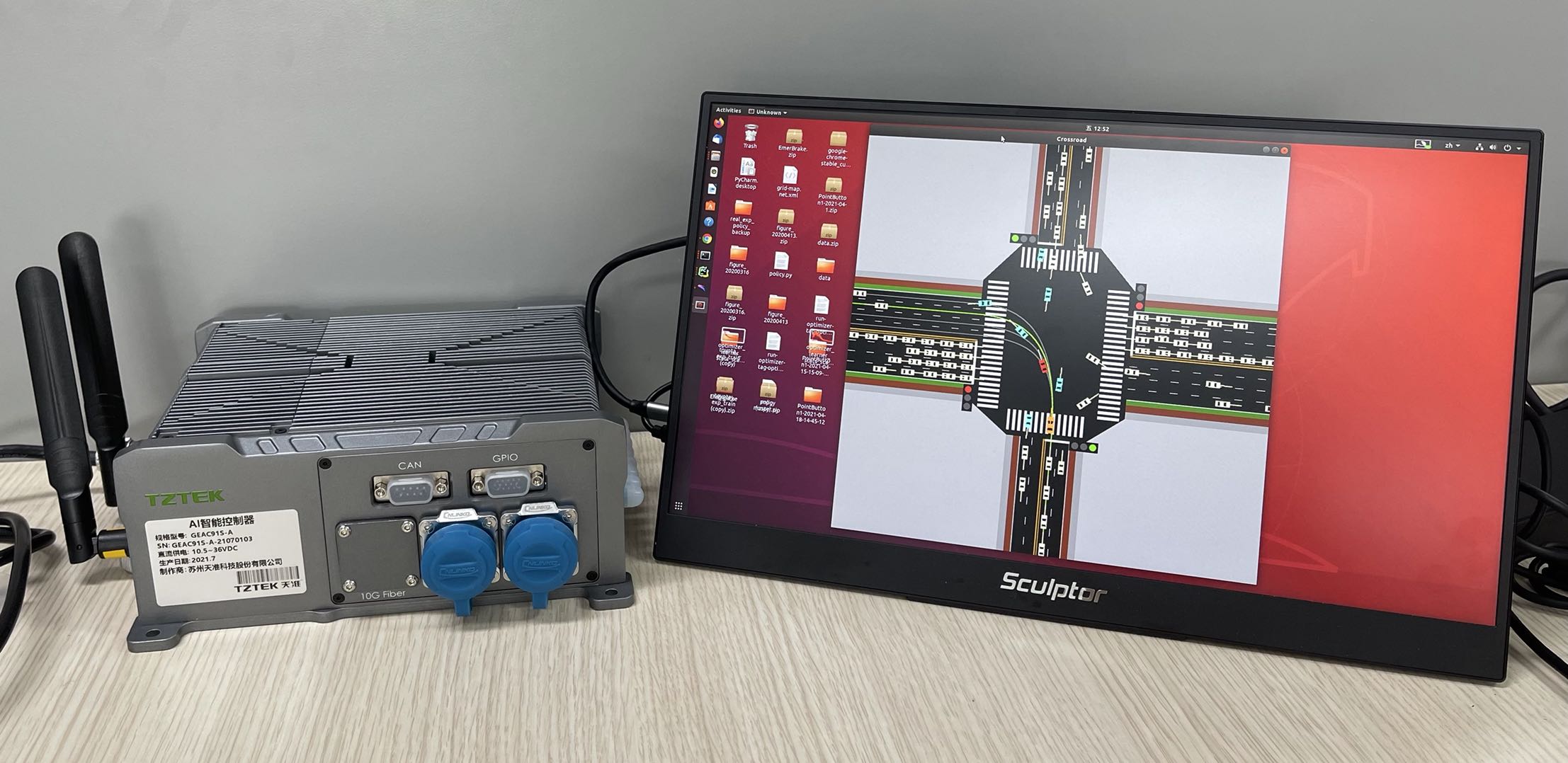}
 		\caption{HIL system with real autonomous vehicle controller.}
 		\label{fig:hil}
 	\end{figure}
 	 The neural network model of the policy is converted from pb model (trained and saved in TensorFlow) to TensorRT engine, which is optimized and accelerated by TensorRT, an inference framework from Nvidia. Notably, we do not train the model on the controller. We use a communication message queue, ZeroMQ to connect the controller module and a personal computer with simulation environments shown in Figure \ref{fig:hil}. At each step in the HIL experiment, the simulated observations from simulation environments are transmitted to controller and taken as inputs by the network. The network then output the control command using the inference API inside the controller. The control action is sent back to the simulation environment using the communication message queue.
 	\subsubsection{Training Results}
 	The training process is purely on the simulation environments before implementing on the hardware, and the learning curves are shown in Figure \ref{fig:exp2curve}. The implicit safe set algorithm requires a simulator that could be reset to a previous state \cite{zhao2021modelfree}, and SUMO simulator does not support the vehicles reset. Therefore, we do not use the safe controller baseline, PPO-ISSA for this experiment. The Episodic costs performance is similar to those safe exploration tasks that SSAC consistently enforces the episodic costs staying near zero. Figure \ref{fig:exp2curve} indicates that there are still about 0.5\% cost rate. After carefully monitoring the simulation process, the reason of these occasional costs is that surrounding vehicles would occasionally collide into ego vehicle even the ego vehicle has stopped. It does not happen frequently or every time the ego vehicle stops, so it is reasonable that SSAC does not learn to avoid it.\footnote{ We have tried to adjust the conservativeness parameter in SUMO but it doesn't work.} The overall return shows that SSAC does not have a distinguishable performance sacrifice, all the algorithms finish the tracking task. We further count the cost rate and cumulative cost to further understand the zero constraint violation property. The cost rate of proposed algorithm \emph{converges to zero}, while the curves of other algorithms remain at a positive number. Similarly, the cumulative costs are consistently increasing during the training procedure, while SSAC merely receives new costs after the initial training stage.
 	\begin{figure}[h]
 		\centering
 		\includegraphics[width=0.49\linewidth]{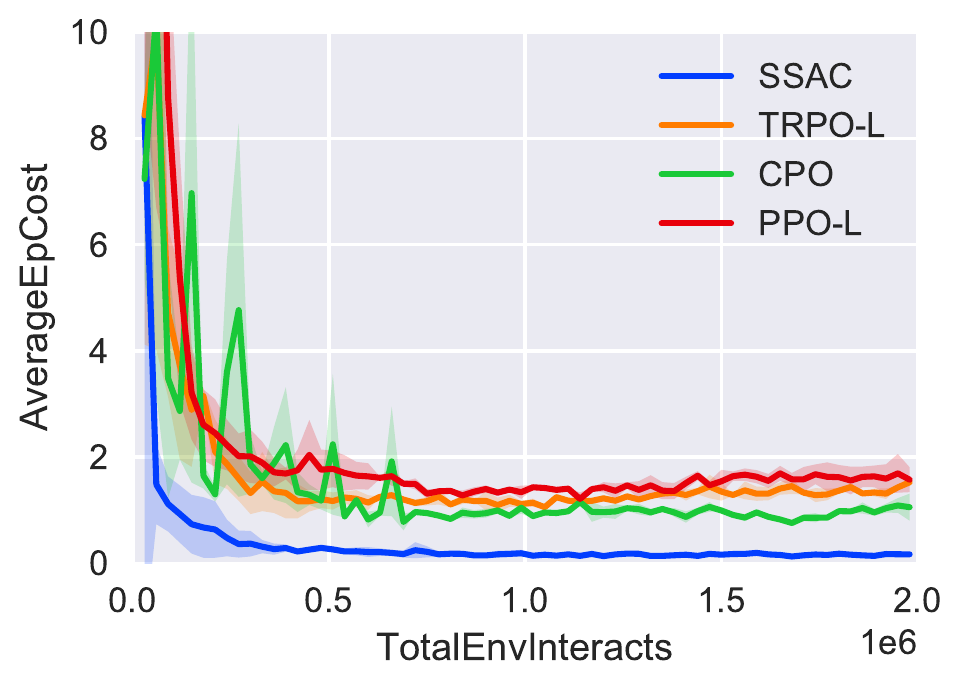}
 		\includegraphics[width=0.49\linewidth]{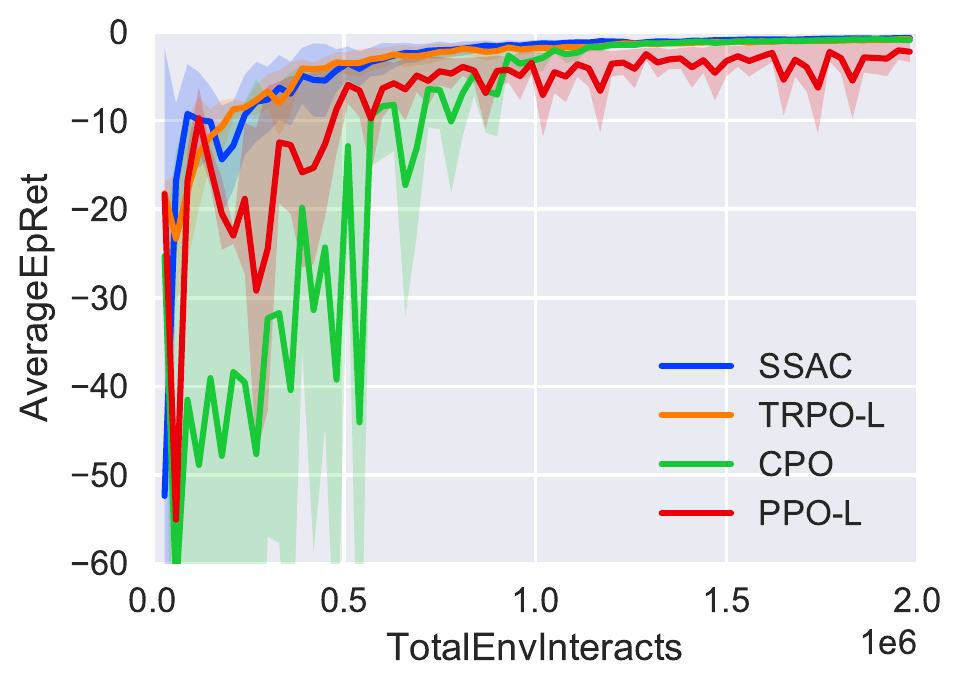}
 		\includegraphics[width=0.49\linewidth]{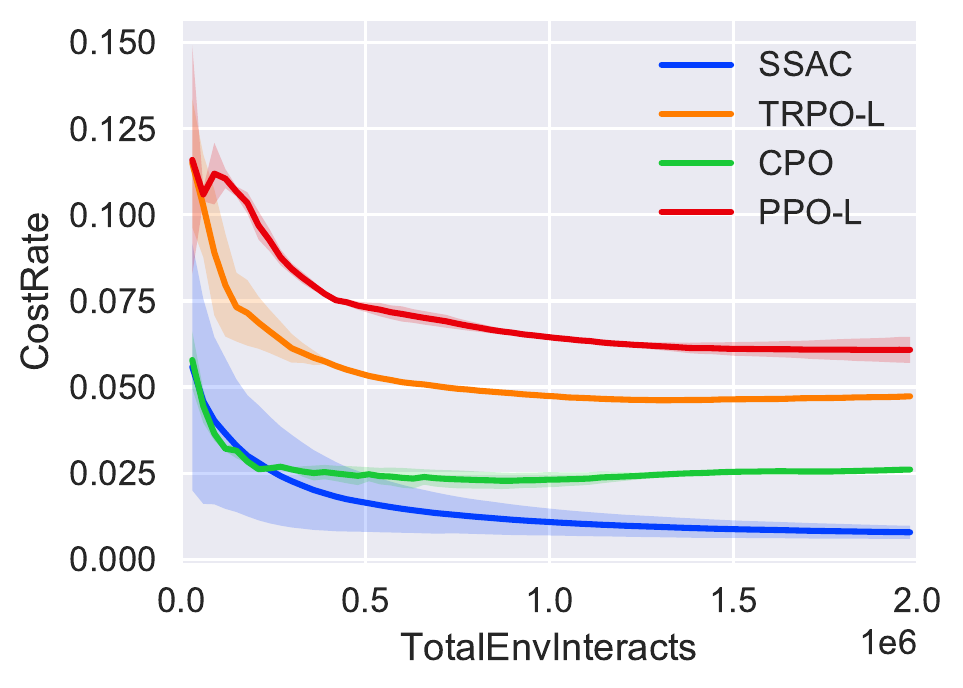}
 		\includegraphics[width=0.49\linewidth]{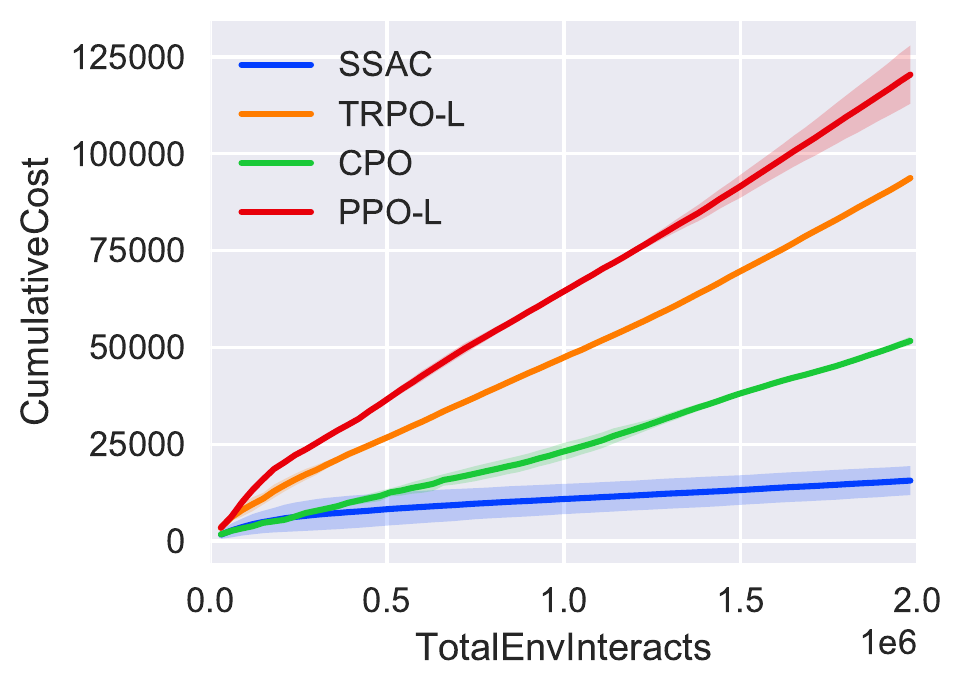}
 		\caption{Training curves of the offline training stage. All algorithm achieves near-zero tracking performance, but all baseline algorithms still have a small but non-negligible constraint violation. The occasional constraint violations of SSAC is caused by flaws in SUMO.}
 		\label{fig:exp2curve}
 	\end{figure}

 	\begin{figure}[h]
 		\centering
 		\includegraphics[width=\linewidth]{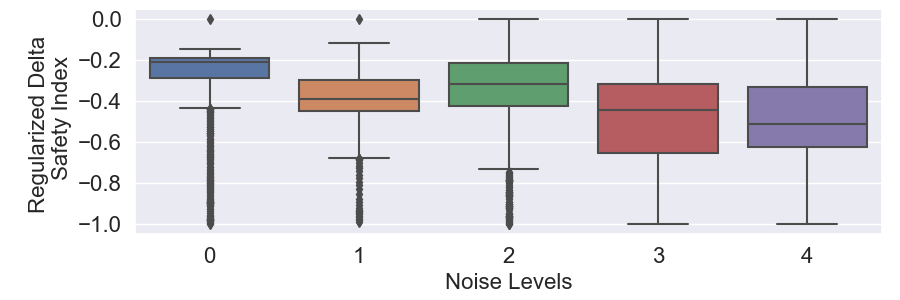}
 		\caption{Safety index transition value $\phi(s') - \max \{\phi(s)-\eta, 0\}$ distribution with different observation noise levels.}
 		\label{fig:boxphi}
 	\end{figure}
	\subsubsection{Implementation Results}
	\begin{table}[ht]
	\centering
	\caption{Action in the intersection environment. }
    \begin{tabular}{ccc}
    \hline
    Outputs                & Notions               & Limits                                   \\ \hline
    Desired Steering Angle & $\delta_{\text{des}}$ & {[}$-$229$^\circ$, 229$^\circ${]}  $^\dag$      \\
    Desired Acceleration   & $a_{\text{des}}$      & {[}$-$3.0m/s$^\text{2}$, 1.5m/s$^\text{2}${]} \\ \hline
    \end{tabular}\\
     \vspace{3pt}$^\dag$ Negative steering angle means turning left.
    \end{table}
	\begin{figure*}[ht]
	    \subfigure[$7.0$s, $\delta_{\text{des}}=-49.9^\circ$,\\$a_{\text{des}}=-1.18\text{m/s}^{\text 2}$]{\includegraphics[width=0.16\linewidth]{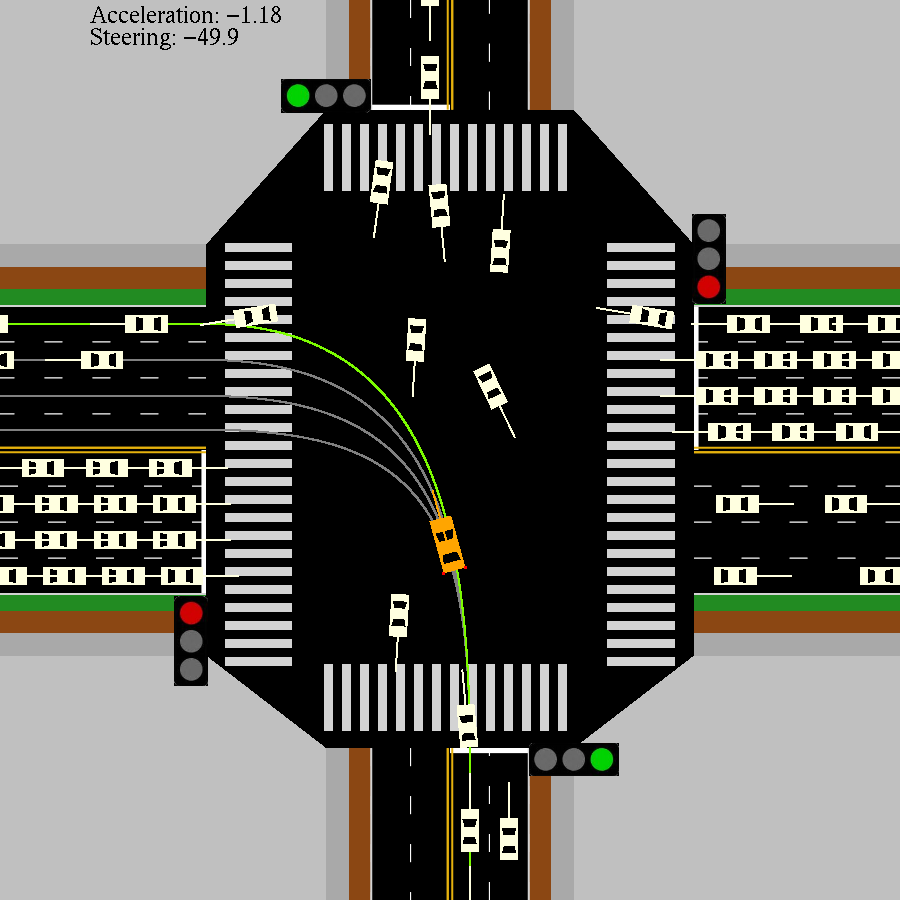}}
	    \subfigure[$7.5$s, $\delta_{\text{des}}=-125^\circ$,\\$a_{\text{des}}=1.49\text{m/s}^{\text 2}$]{\includegraphics[width=0.16\linewidth]{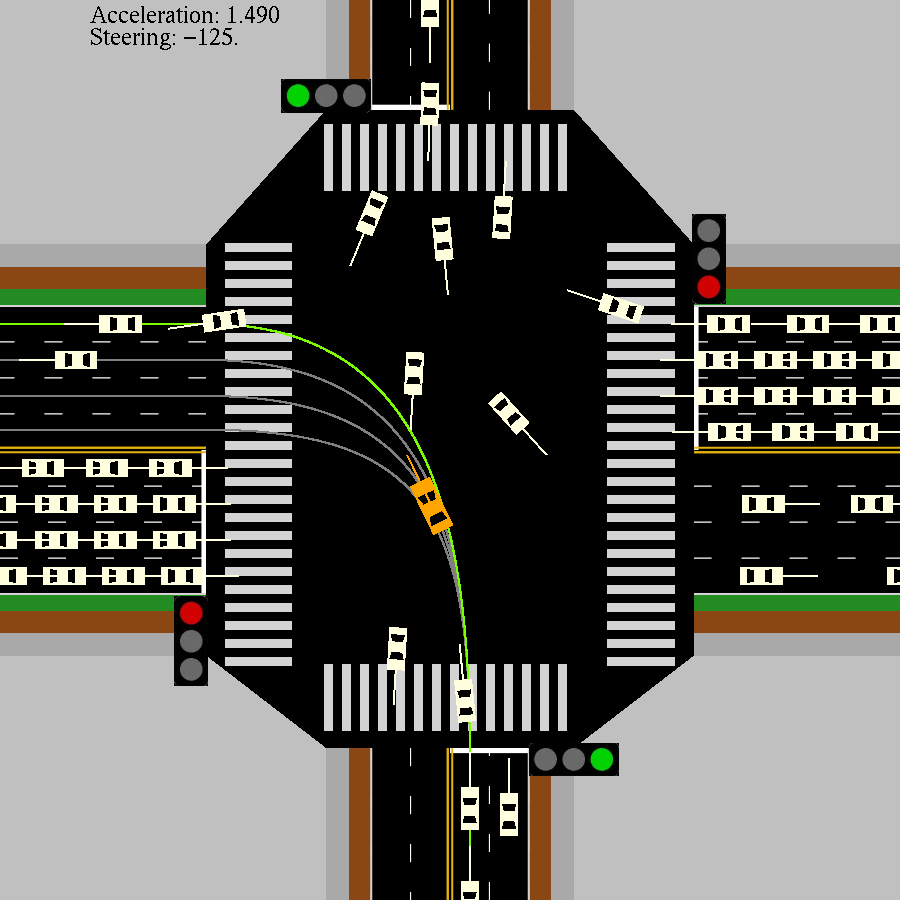}}
	    \subfigure[$8.0$s, $\delta_{\text{des}}=-166^\circ$,\\$a_{\text{des}}=1.33\text{m/s}^{\text 2}$]{\includegraphics[width=0.16\linewidth]{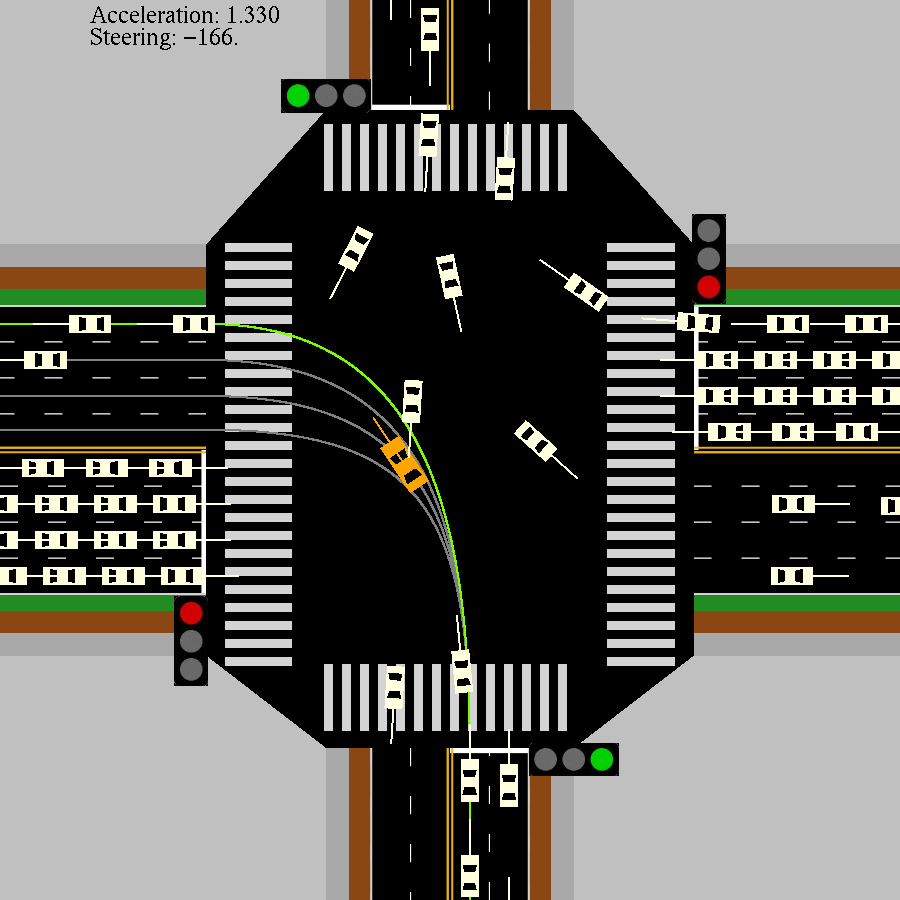}}
	    \subfigure[$8.5$s, $\delta_{\text{des}}=50.7^\circ$,\\$a_{\text{des}}=-2.47\text{m/s}^{\text 2}$]{\includegraphics[width=0.16\linewidth]{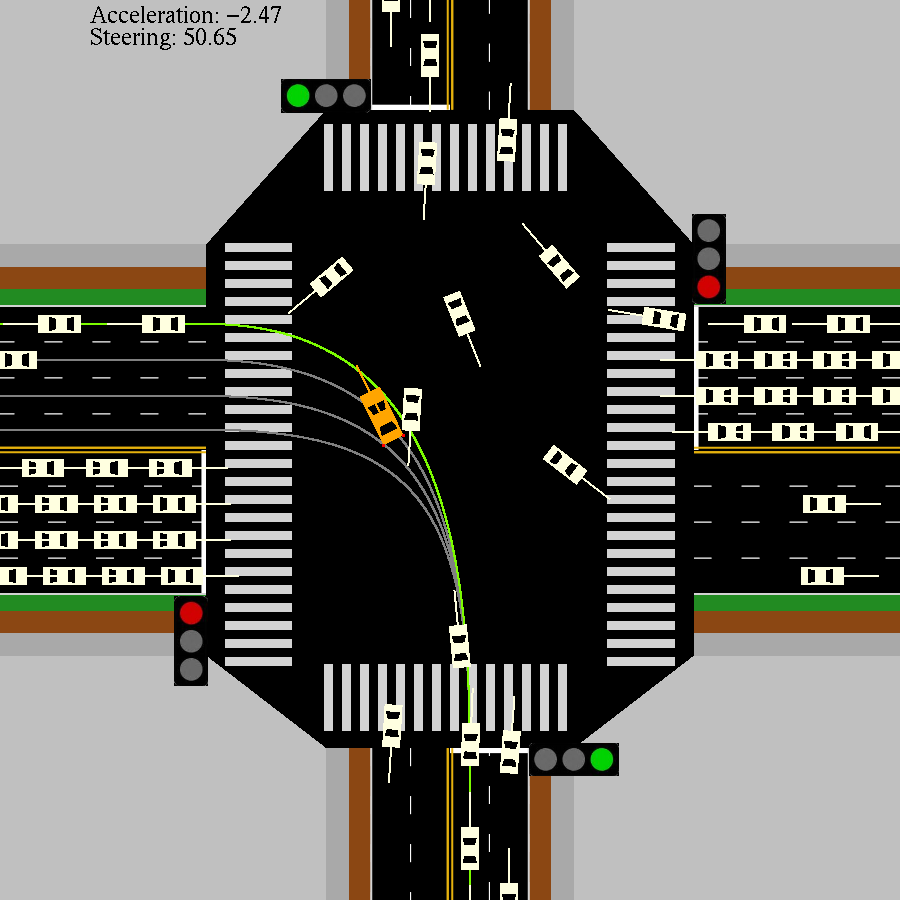}}
	    \subfigure[$9.0$s, $\delta_{\text{des}}=53.0^\circ$,\\$a_{\text{des}}=-2.95\text{m/s}^{\text 2}$]{\includegraphics[width=0.16\linewidth]{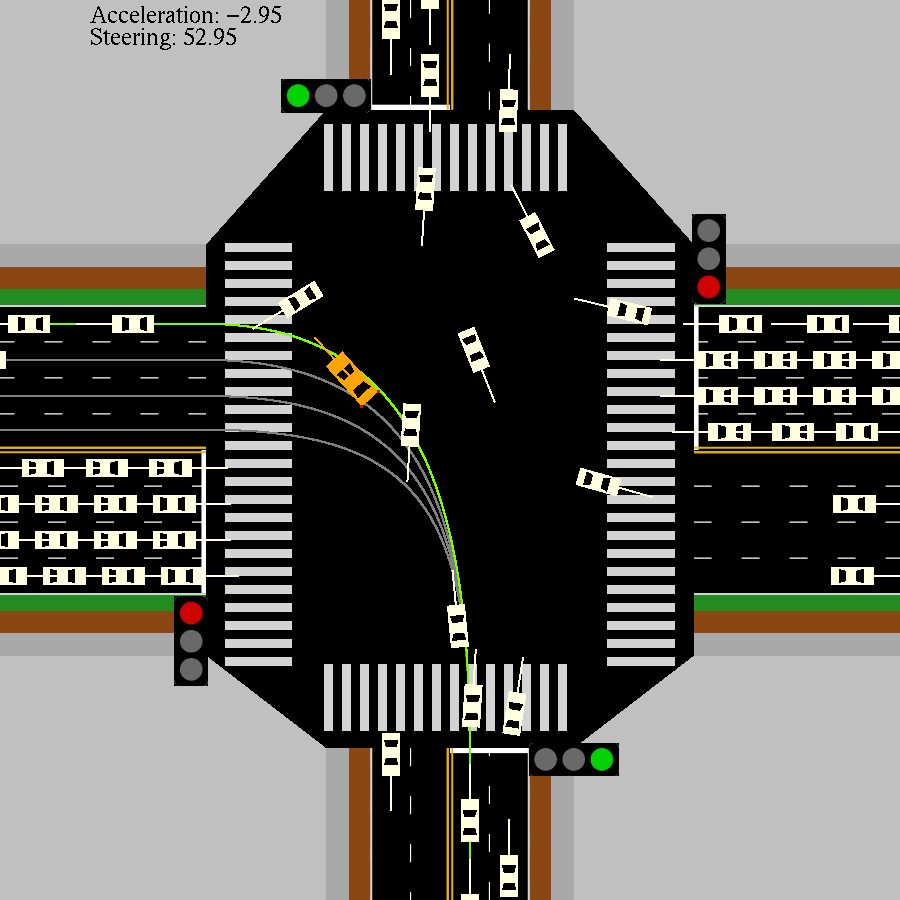}}
	    \subfigure[$9.5$s, $\delta_{\text{des}}=15.4^\circ$,\\$a_{\text{des}}=-2.99\text{m/s}^{\text 2}$]{\includegraphics[width=0.16\linewidth]{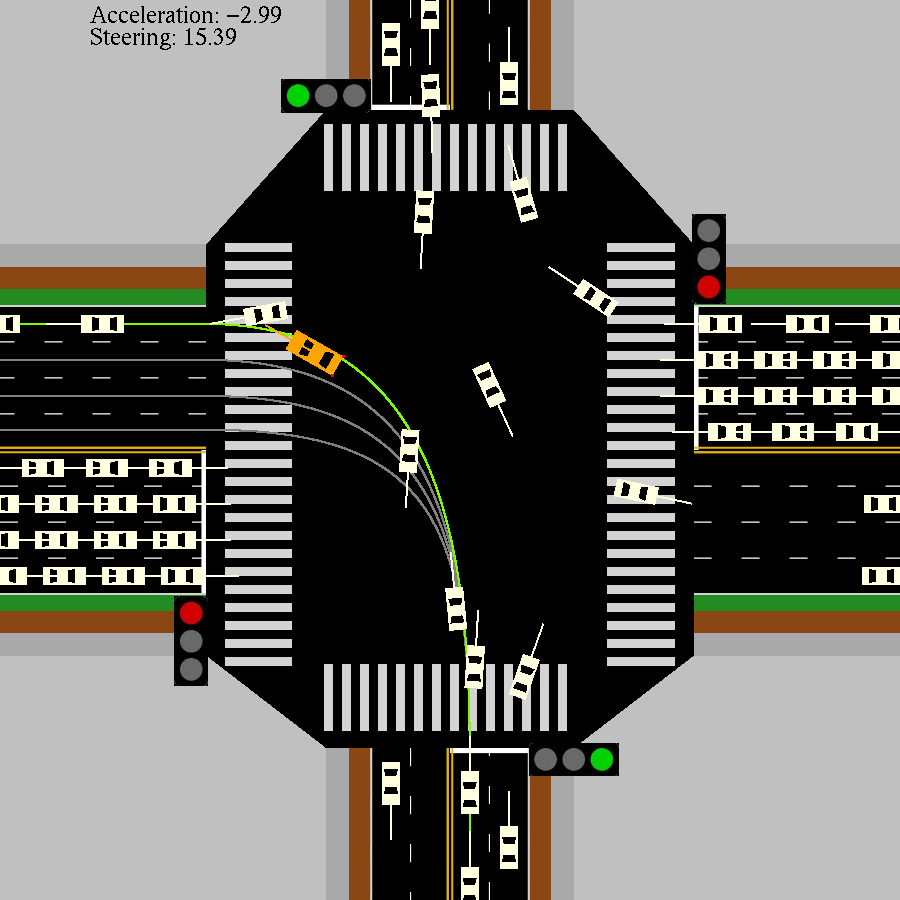}}
	    \caption{Typical collision avoidance maneuvers using SSAC: \emph{left bypassing}. The ego vehicle first brakes a bit, then turn left and accelerate to bypass the opposite vehicle. The ego vehicle then brakes hard to avoid the next vehicle turning right in front.}
	    \label{fig:microscopic1}
	    \subfigure[$6.5$s, $\delta_{\text{des}}=-11.0^\circ$,\\$a_{\text{des}}=0.48\text{m/s}^{\text 2}$]{\includegraphics[width=0.16\linewidth]{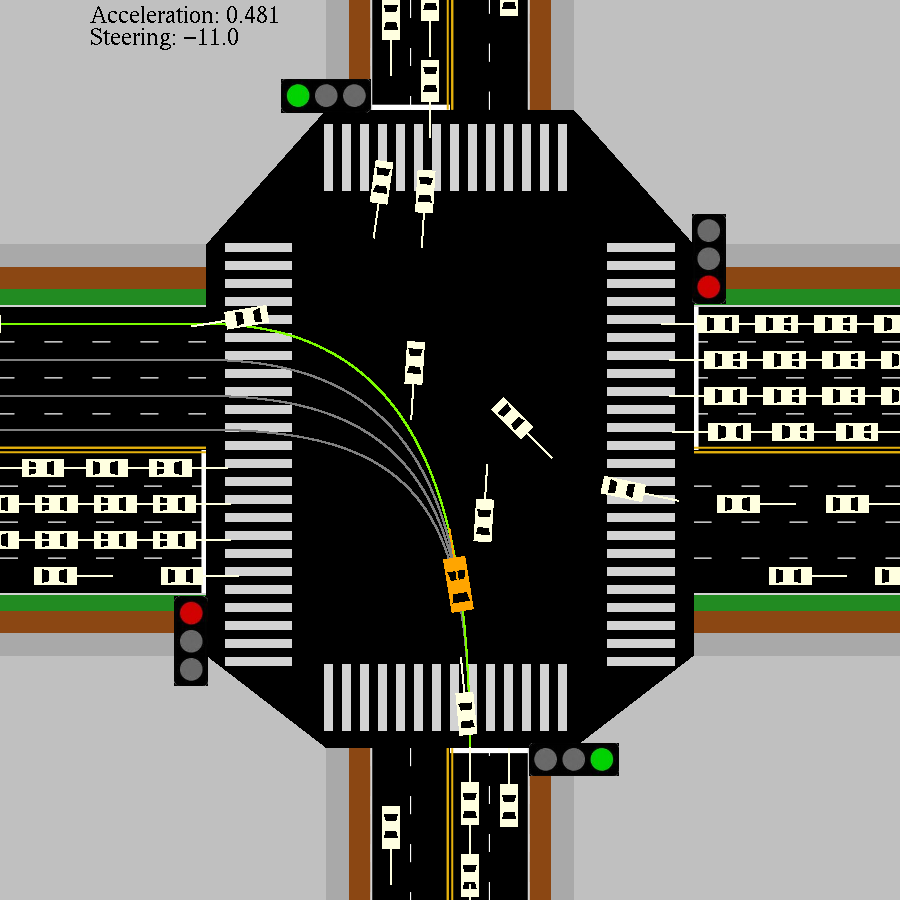}}
	    \subfigure[$7.0$s, $\delta_{\text{des}}=46.6^\circ$,\\$a_{\text{des}}=-2.17\text{m/s}^{\text 2}$]{\includegraphics[width=0.16\linewidth]{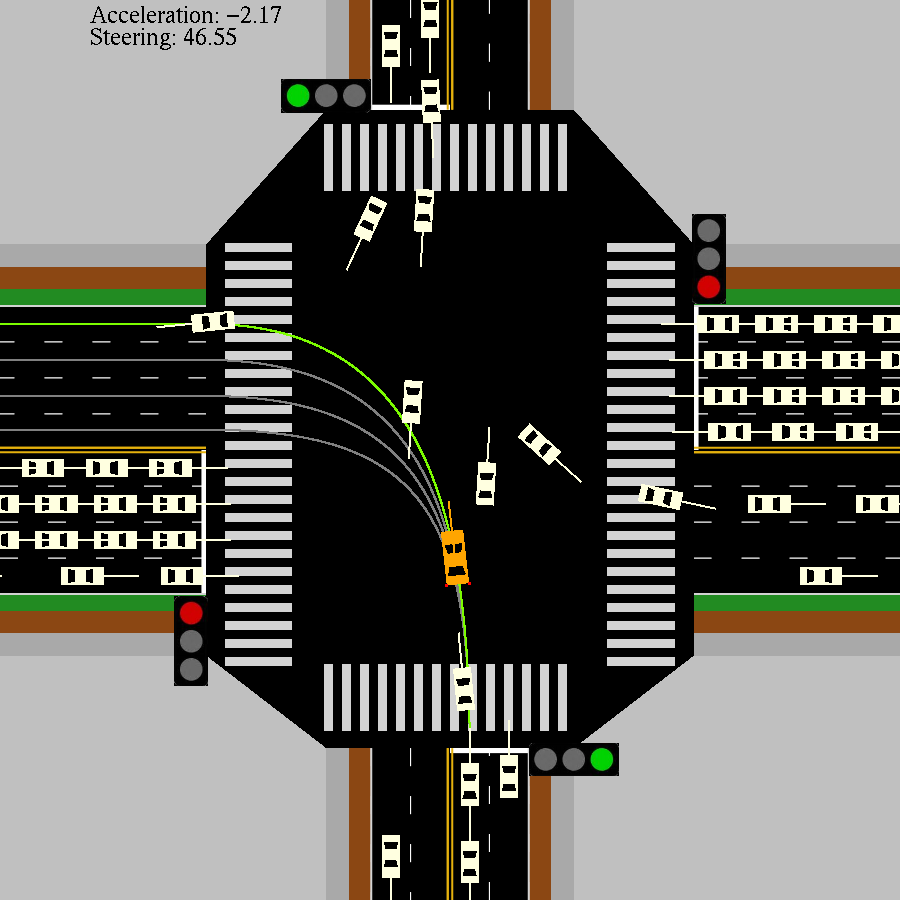}}
	    \subfigure[$7.5$s, $\delta_{\text{des}}=129^\circ$,\\$a_{\text{des}}=0.92\text{m/s}^{\text 2}$]{\includegraphics[width=0.16\linewidth]{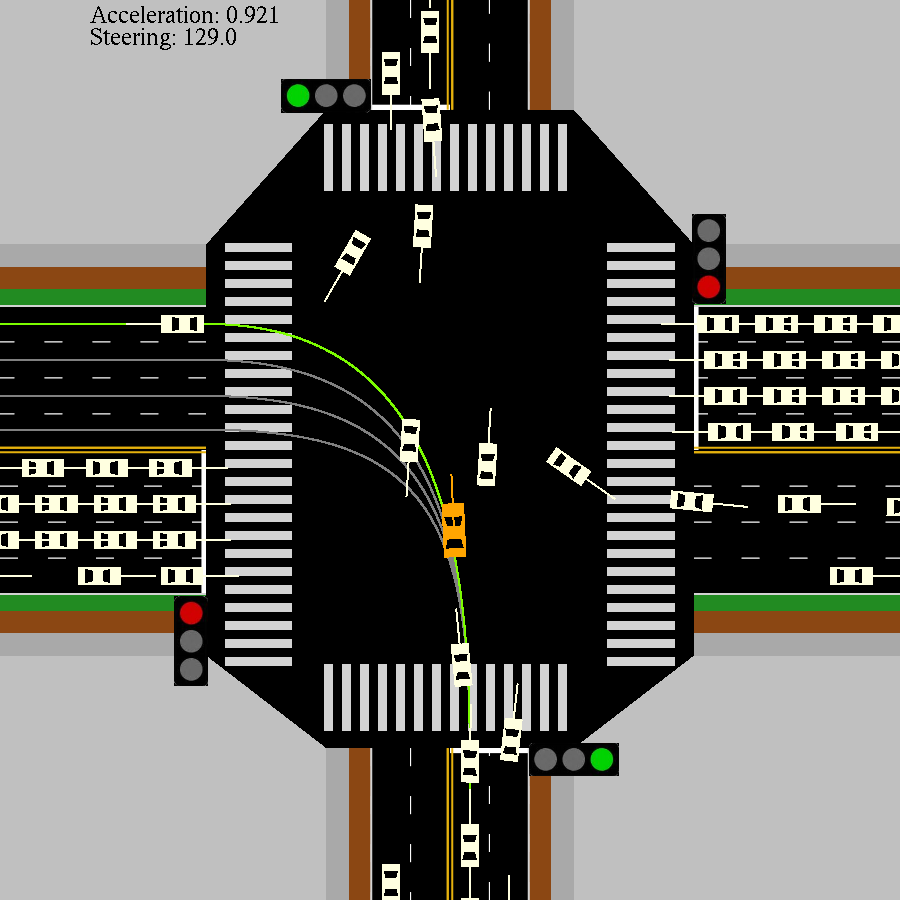}}
	    \subfigure[$8.0$s, $\delta_{\text{des}}=128.9^\circ$,\\$a_{\text{des}}=0.94\text{m/s}^{\text 2}$]{\includegraphics[width=0.16\linewidth]{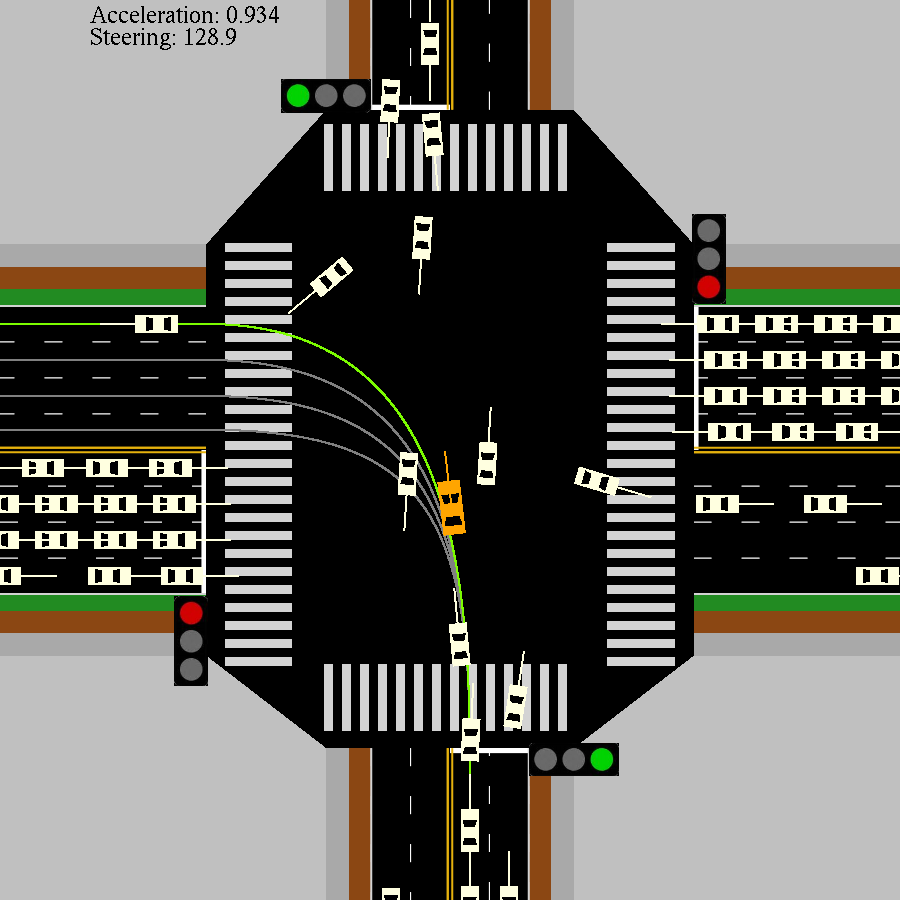}}
	    \subfigure[$8.5$s, $\delta_{\text{des}}=106.5^\circ$,\\$a_{\text{des}}=1.44\text{m/s}^{\text 2}$]{\includegraphics[width=0.16\linewidth]{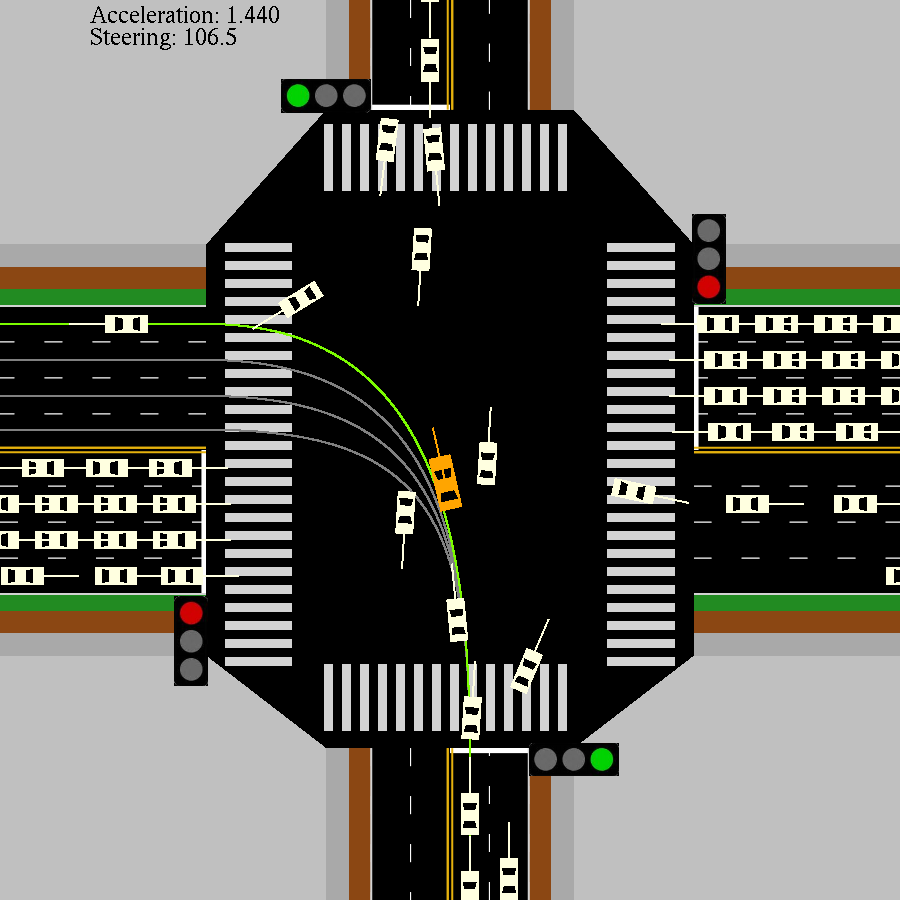}}
	    \subfigure[$9.0$s, $\delta_{\text{des}}=-17.3^\circ$,\\$a_{\text{des}}=-0.64\text{m/s}^{\text 2}$]{\includegraphics[width=0.16\linewidth]{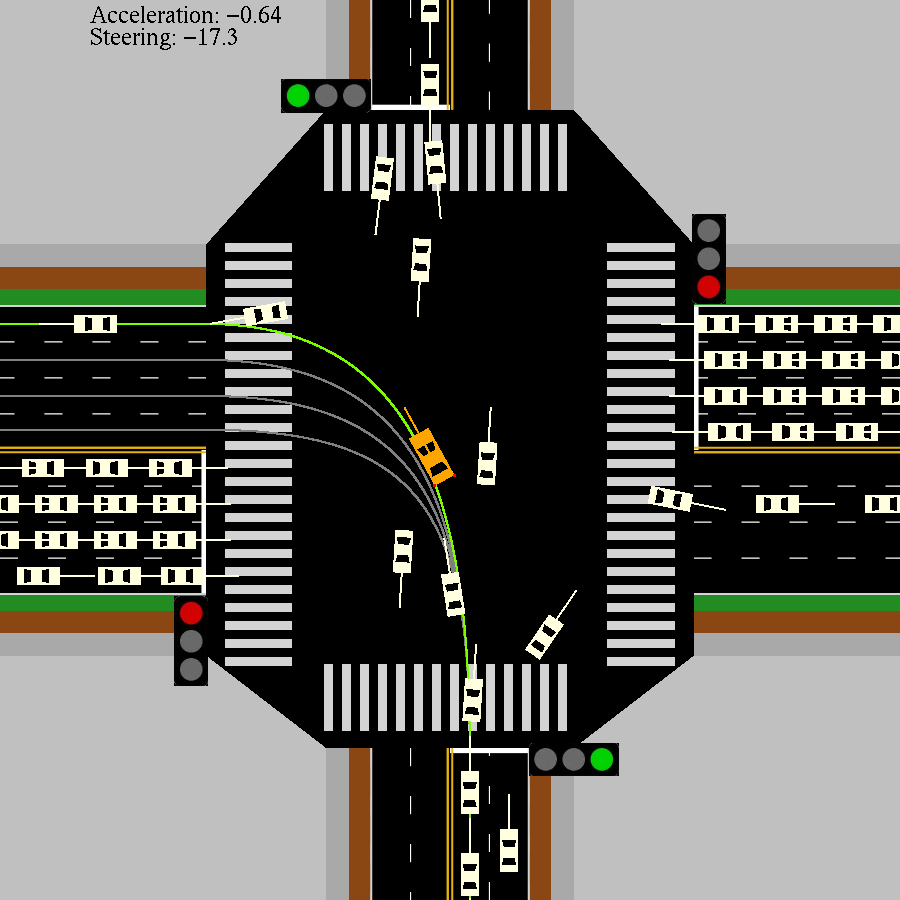}}
	    \caption{Typical collision avoidance maneuvers using SSAC: \emph{right bypassing}. The ego vehicle first brakes a bit, then deviate to the right of the reference trajectory and wait the opposite vehicle to pass. Then ego vehicle accelerate to pass the intersection.}
	    \label{fig:microscopic2}
	    \subfigure[$6.5$s, $\delta_{\text{des}}=-10.4^\circ$,\\$a_{\text{des}}=0.03\text{m/s}^{\text 2}$]{\includegraphics[width=0.16\linewidth]{fig/fig_fail_1/65.png}}
	    \subfigure[$7.0$s, $\delta_{\text{des}}=-14.1^\circ$,\\$a_{\text{des}}=0.82\text{m/s}^{\text 2}$]{\includegraphics[width=0.16\linewidth]{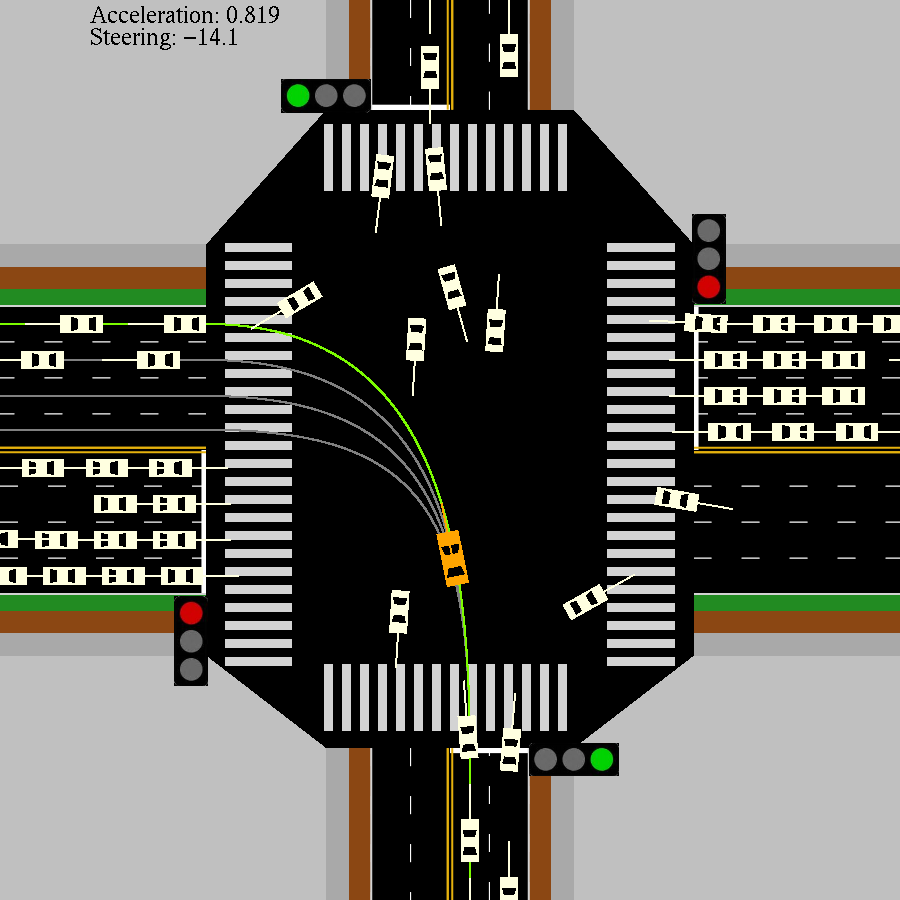}}
	    \subfigure[$7.5$s, $\delta_{\text{des}}=-85.7^\circ$,\\$a_{\text{des}}=1.50\text{m/s}^{\text 2}$]{\includegraphics[width=0.16\linewidth]{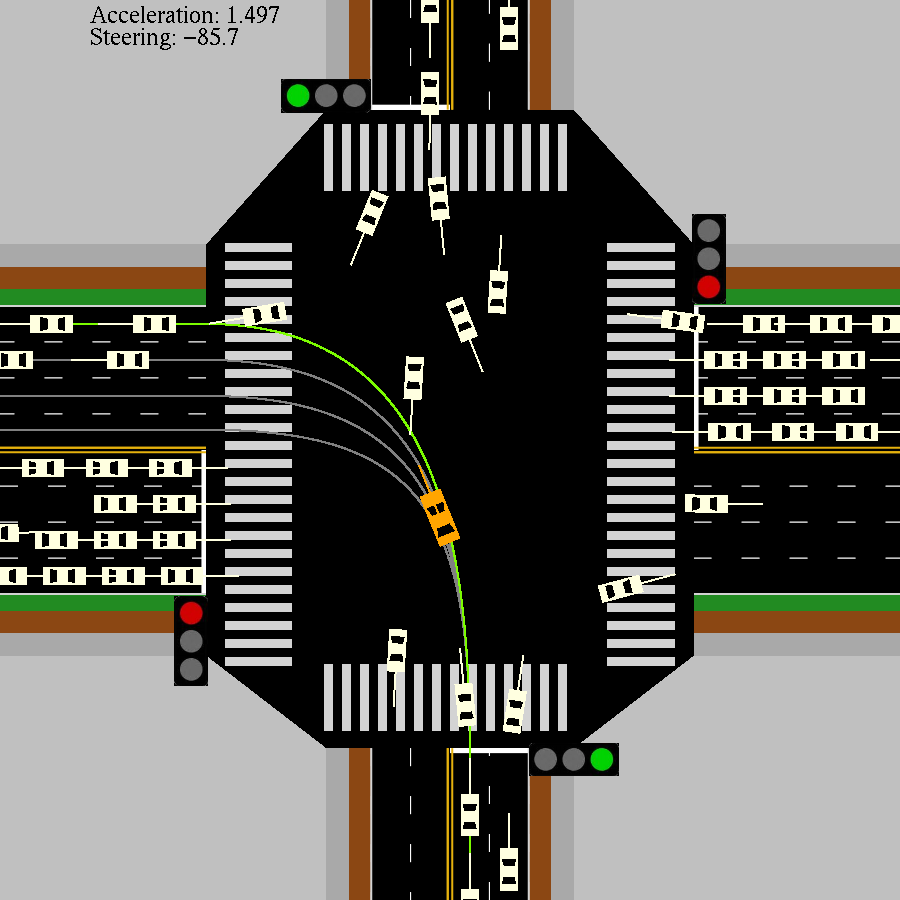}}
	    \subfigure[$8.0$s, $\delta_{\text{des}}=-143^\circ$,\\$a_{\text{des}}=1.38\text{m/s}^{\text 2}$]{\includegraphics[width=0.16\linewidth]{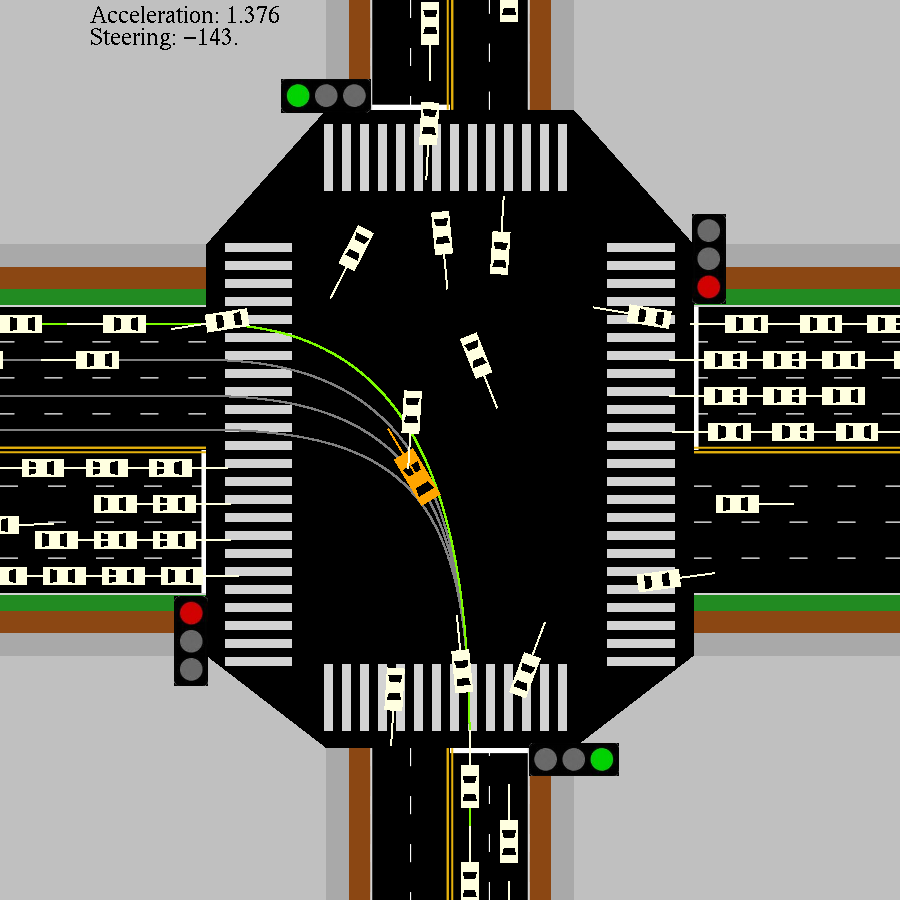}}
	    \subfigure[$8.5$s, $\delta_{\text{des}}=-31.5^\circ$,\\$a_{\text{des}}=-0.10\text{m/s}^{\text 2}$]{\includegraphics[width=0.16\linewidth]{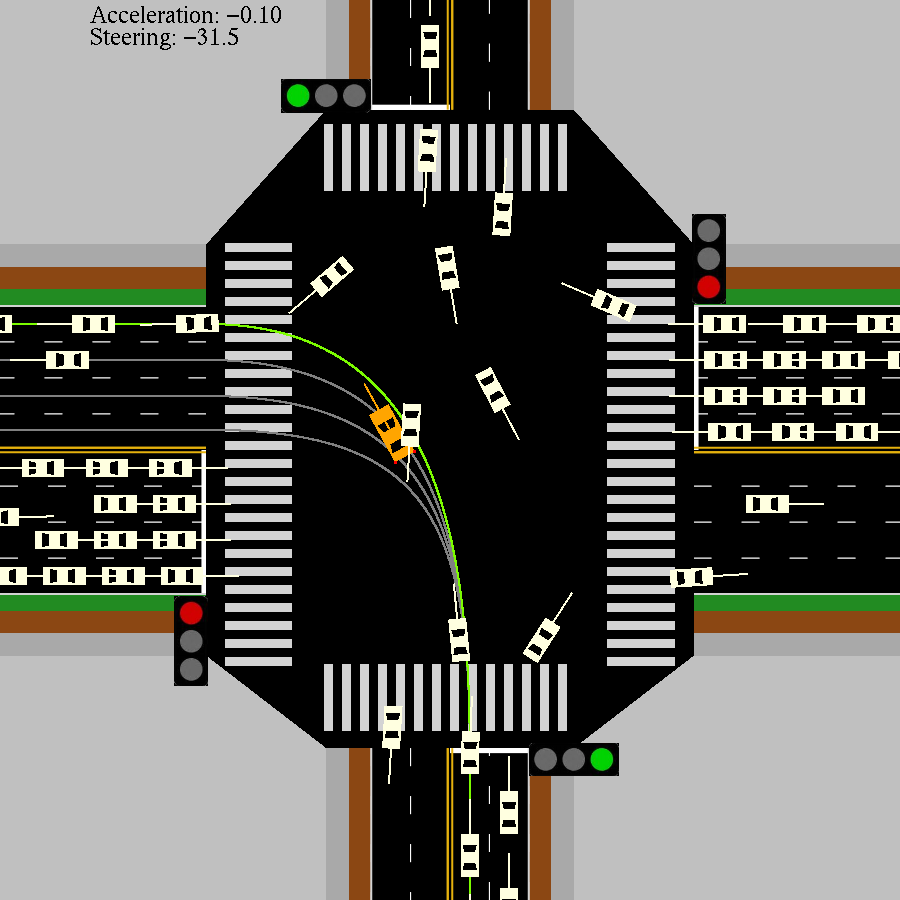}}
	    \subfigure[$9.0$s, $\delta_{\text{des}}=-42.2^\circ$,\\$a_{\text{des}}=-2.84\text{m/s}^{\text 2}$]{\includegraphics[width=0.16\linewidth]{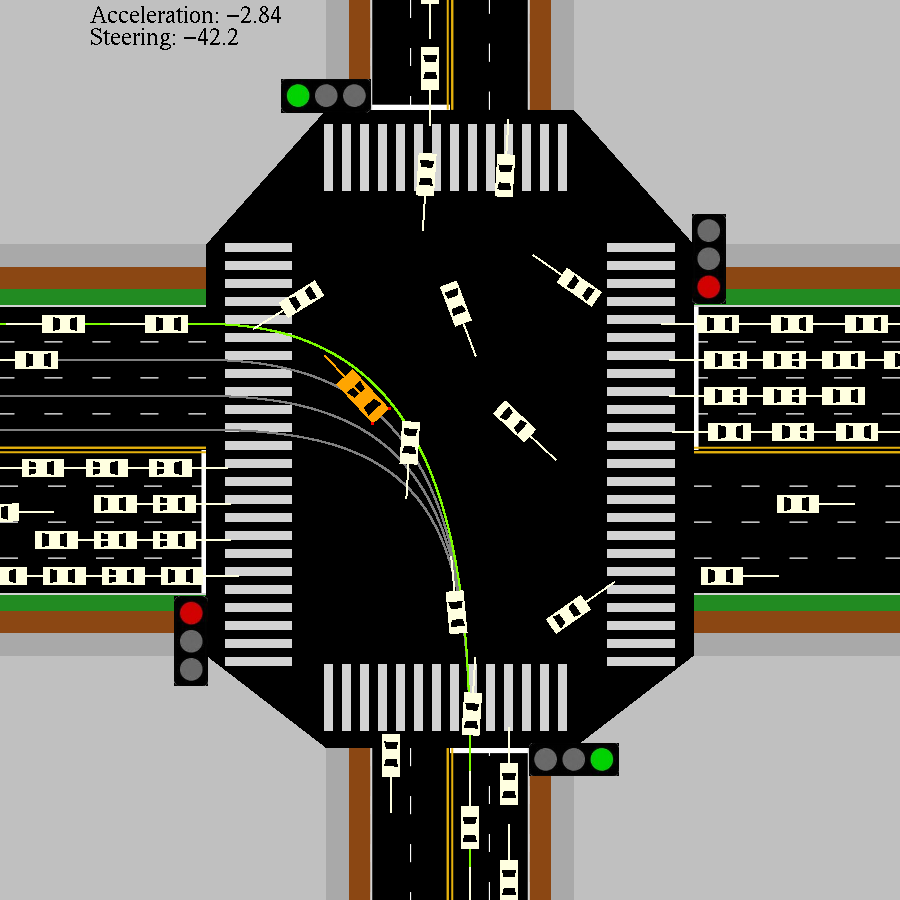}}
	    \caption{Typical collision avoidance maneuvers using baseline algorithm (CPO): \emph{collision}. The ego vehicle does not brake in advance, and turn left later than SSAC, which causes that the ego vehicle collides with the vehicle from the opposite direction.}
	    \label{fig:microscopic3}
	\end{figure*}
 	After training, we implemented the algorithm on the controller with the hardware compatible inference module. We choose the left turn, the task with the most potential conflicts with surrounding vehicles, to compare the differences in the collision avoidance maneuvers between proposed and baseline algorithms. We select a typical case when ego vehicle has to avoid a fast vehicle from the opposite straight line. We capture the screen every 500 ms, and select the conflict periods as shown in Figure \ref{fig:microscopic1}, \ref{fig:microscopic2} and \ref{fig:microscopic3}. The first two episode shown in Figure \ref{fig:microscopic1} and \ref{fig:microscopic2} demonstrate two different avoidance maneuver of SSAC policy. We add reference trajectory (the highlighted one) to compare how much the ego vehicle deviates from it under different algorithms. SSAC in Figure \ref{fig:microscopic1} firstly breaks a bit, and then accelerate pass from below (or left from the perspective of ego vehicle). Ego vehicle keep a small but distinguishable distance from the opposite vehicle. SSAC in Figure \ref{fig:microscopic2} brakes harder and deviates to the right of reference trajectory to wait the opposite vehicle to pass, and then continue to pass the intersection. As for baseline algorithm like CPO in Figure \ref{fig:microscopic3}, it does not choose the wait maneuver since it hurries to pass the intersection. When passing for below, CPO policy does not take enough steering or decelerate so it collide with the opposite vehicle.  CPO turns smaller than SSAC and does not decelerate, which ends in collision with the red vehicle from the opposite side. The value function
 	  	\begin{figure}[h]
 		\centering
 		\includegraphics[width=\linewidth]{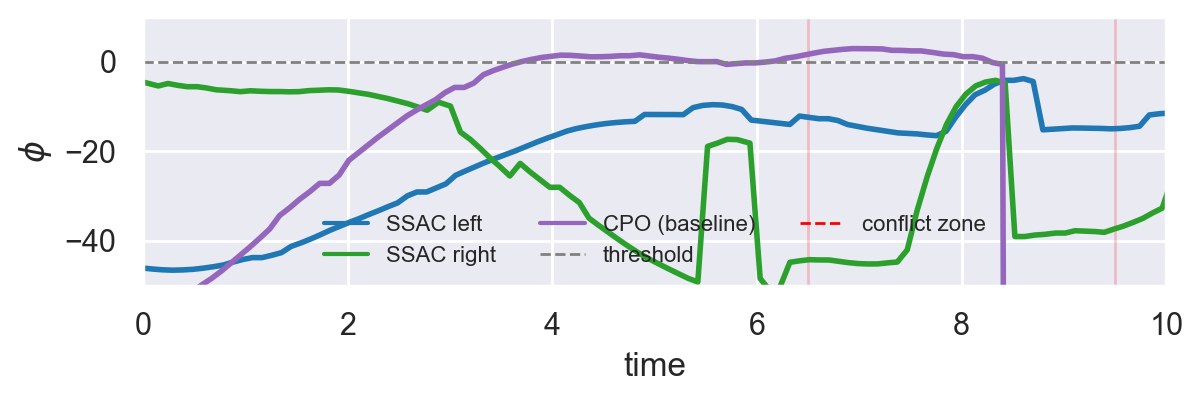}
 		\caption{Safety index $\phi(s)$ value with three corresponding cases. The curve indicates the highest value function among all the filtered surrounding vehicles. The sudden decreasing of safety index is because that the vehicles on the right side of ego vehicle are not considered in the state representation. }
 		\label{fig:valuephi}
 	\end{figure}
 	
 	For the purpose of verifying the constraint-satisfying property of control safe set conditions \eqref{eq:cstrphi0}, we first select 4 random episodes to see the safety index value shown in Figure \ref{fig:valuephi}. As the maximum distance of each random task is different, we regularize the safety index value to demonstrate the trends in the whole episode. It is obvious that $\phi(s)$ is effectively confined when it reaches closely to 0. It verifies the satisfaction the safe set safety constraint \eqref{eq:cstrphi0}.
 \begin{table}%[tbhp]
\centering
\caption{Noise level and the corresponding standard deviation.}
\label{tab.noise_level}
\begin{tabular}{cccccc}
\toprule
Noise level & 0 & 1 & 2 & 3 & 4  \\
\midrule
$\delta_p$ [m]&0 & 0.017 & 0.033 & 0.051 & 0.068  \\
$\delta_{\phi}$ [$\degree$]& 0 & 0.017 & 0.033 & 0.051 & 0.068  \\
$p^j_x,p^j_y$ [m]& 0 & 0.05 & 0.10 & 0.15 & 0.20  \\
$v^j$ [m/s]& 0 & 0.05 & 0.10 & 0.15 & 0.20 \\
$\eta^j$ [$\degree$]& 0 & 1.4 & 2.8 & 4.2 & 5.6 \\
\bottomrule
\end{tabular}
\end{table}
 	Considering real-world sensor noise, we further add white noise term on the observation passed to the controllers, and TABLE \ref{tab.noise_level} depicts the noise level. Figure \ref{fig:boxphi}(b) depicts the distribution of delta safe index, i.e., the LHS of \eqref{eq:cstrphi0}. As the noise level increases, the safety index changes more violently, but the max value maintains as zero indicating that the constraints are not violated. It shows the great robustness of the proposed algorithms.
 	\section{Conclusion}
 	\label{sec:7}
 	In this paper, we proposed the safe set actor critic (SSAC), which can learn a zero constraint violation policy using model-free constrained reinforcement learning. SSAC did not requires specific tasks or type of models, which has the potential to use in general real-world safety-critical artificial intelligence systems. Furthermore, we proved the convergence of the Lagrangian-based constrained RL algorithm with a multiplier neural network, which is also promising for other state-dependent safety constraints in RL problems.
 	
 	Indeed, there are some limitations with the proposed algorithm. For instance, there are still constraint violation in the initial training stage. Our future interest includes investigation of effective pre-training methods to learn the change of safety index before the real-world interaction. The parameterization of the safety index can also be further improved, for example, a neural safety index.
	\appendix
 	\subsection{Proof of Theorem 1}
 	\label{appendix:proof1}
	\begin{step}[Convergence of $\theta$ update]
	\end{step}
		The $\theta$ update can be also formulated as using the stochastic gradient descent:
		\begin{equation}
			\theta_k = -\left.{\hat \nabla}_{\theta} L(\theta,\xi)\right|_{\theta=\theta_{k}}+\delta \theta_{k+1}\label{eq: thetaup}
		\end{equation}
		
		Consider the gradient with respect to $\theta$ at $k$ step is as a random variable $G_{\theta_k}(s_t, a_t)$. First we give the lemma about 
	\label{sec:52}
	\begin{lemma}{Lipschitz continuous of gradient respect to $\theta$.}
		\begin{equation}
			\begin{aligned}
				&{\hat \nabla}_{\theta} L(\theta,\xi) = G_{\theta_k}(s_t, a_t) = \nabla_{\theta} \alpha \log \left(\pi_{\theta}\left(a_{t} \mid s_{t}\right)\right)+\\
				&\left(\nabla_{a_{t}} \alpha \log \left(\pi_{\theta}\left(a_{t} \mid s_{t}\right)\right)-\nabla_{a_{t}}\left(Q_{w}\left(s_{t}, a_{t}\right)-\lambda_{\xi}\left(s_{t}\right) J_{w_c}^{\pi_\theta}\left(s_{t}, a_{t}\right)\right)\right)\\
				&\nabla_{\theta} f_{\theta}\left(\epsilon_{t} ; s_{t}\right)
			\end{aligned}
		\end{equation}
	\end{lemma}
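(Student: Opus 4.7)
The plan is to deduce Lipschitz continuity of the stochastic gradient $G_{\theta_k}(s_t,a_t)$ in $\theta$ by decomposing the expression into its three structural factors and then combining them via the standard algebra of bounded Lipschitz functions on a compact set. The whole argument rests on Assumption 2 (compactness of $\Theta,\Xi,\mathcal{Z}$ and Lipschitz continuity of the networks), so I would first state explicitly what that assumption buys us: since $\pi_\theta$, $f_\theta$, $Q_w$, $\lambda_\xi$ and $Q_{w_c}$ are Lipschitz continuous in their inputs and parameters, and their parameters live in compact sets, both the networks and their first-order partial derivatives are uniformly bounded; together with a smooth activation (e.g.\ $\tanh$, softplus) this also makes the Jacobians themselves Lipschitz continuous on the domain of interest.

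Next, I would split $G_{\theta_k}$ into the three factors appearing in the displayed formula and handle them separately. The first factor $\nabla_\theta\,\alpha\log\pi_\theta(a_t\mid s_t)$ is a partial derivative of a smooth parameterized log-density and is therefore Lipschitz in $\theta$ directly. The last factor $\nabla_\theta f_\theta(\epsilon_t;s_t)$ is Lipschitz by the same reasoning. The middle factor is the more delicate one: it involves action-gradients evaluated at $a_t=f_\theta(\epsilon_t;s_t)$. For the policy entropy piece I would use the chain rule to write $\nabla_{a_t}\log\pi_\theta(a_t\mid s_t)\big|_{a_t=f_\theta}$ as a composition of a Lipschitz map with a Lipschitz $\theta\mapsto f_\theta(\epsilon_t;s_t)$, giving Lipschitzness in $\theta$. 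The critic pieces $\nabla_{a_t}Q_w$ and $\nabla_{a_t}Q_{w_c}$ depend on $\theta$ only through $a_t=f_\theta$, and so are Lipschitz in $\theta$ by the same composition argument; multiplication by $\lambda_\xi(s_t)$ preserves this because $\xi$ is fixed when differentiating in $\theta$ and $\lambda_\xi(s_t)$ is uniformly bounded on $\Xi$.

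Finally, I would invoke the elementary fact that if $h_1,h_2:\Theta\to\mathbb{R}^d$ are bounded and Lipschitz, then so are $h_1+h_2$ and $h_1\cdot h_2$ (with constant $\|h_1\|_\infty L_{h_2}+\|h_2\|_\infty L_{h_1}$). Applying this inductively to the sum-of-products structure of $G_{\theta_k}$ yields a single global Lipschitz constant $L_G$, depending on the bounds of the networks and their Jacobians over the compact parameter sets, but independent of the particular sample $(s_t,a_t)$. This is exactly the hypothesis required later for the ODE approximation of the $\theta$-iterates in \eqref{eq: thetaup}.

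The main obstacle I anticipate is not the combinatorial bookkeeping but the gap between \emph{Lipschitz continuity of a network} and \emph{Lipschitz continuity of its gradient}: a generic Lipschitz function need not have a Lipschitz derivative. The cleanest fix is to strengthen (or read into) Assumption 2 the requirement that the activations are $C^2$ with bounded second derivatives, so that all the parameter- and input-Jacobians entering $G_{\theta_k}$ are themselves Lipschitz on the compact domain. Under this mild strengthening the rest of the proof reduces to routine algebra; without it, one would need either a locally Lipschitz (almost everywhere) variant of the subsequent ODE argument or a specific activation-function analysis, which I would flag explicitly.
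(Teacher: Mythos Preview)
Your proposal is correct and in fact more careful than the paper's own argument. The paper's proof is a two-sentence sketch: it invokes the specific squashed-Gaussian parameterization of the policy (sample $u\sim\mathcal{N}(\mu(\theta),\sigma^2(\theta))$, then apply $\tanh$), notes that both the Gaussian density and $\tanh$ are Lipschitz, and concludes directly that $G_{\theta_k}$ is Lipschitz. Your route is different in that you work abstractly from Assumption~2 (compact parameter sets, Lipschitz networks), decompose $G_{\theta_k}$ into its three structural factors, and combine them via the sum/product rule for bounded Lipschitz maps. This buys you a proof that does not depend on the particular policy class, and it makes explicit exactly which uniform bounds are being used. Most importantly, you identify a gap the paper simply skips: Lipschitz continuity of the networks does not by itself give Lipschitz continuity of their \emph{gradients}, and you propose the natural strengthening ($C^2$ activations with bounded second derivatives). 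The paper's appeal to the $\tanh$-squashed Gaussian is, implicitly, a special case of your fix---those maps are smooth---but the paper never articulates the issue or the resolution.
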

	\begin{proof}
		As the nerual network policy and value is continuously differentiable, and the squashed Gaussian function is constructed by two steps, i.e., compute the action by a normal distribution $u\sim N(\mu(\theta),\sigma^2(\theta))$, and squashing the action by $\tanh$. As both the probability density of normal distribution and $\tanh$ is Lipschitz, The $G_{\theta_k}(s_t, a_t)$ or ${\hat \nabla}_{\theta} L(\theta,\xi)$ is also Lipschitz.
	\end{proof}
	\begin{equation}
		\theta_k = -\left.{\hat \nabla}_{\theta} L(\theta,\xi)\right|_{\theta=\theta_{k}}+\delta \theta_{k+1}\label{eq: thetaup}
	\end{equation}
	and the Martingale difference term with respect to $\theta$ is computed by
	\begin{equation}
		\begin{aligned}
			\delta \theta_{k+1} = {\hat \nabla}_{\theta} L(\theta,\xi)|_{\theta=\theta_{k}} - \mathbb{E}_{s\sim d^{\pi_{\theta}}_\gamma}\big\{\mathbb{E}_{a\sim\pi_\theta} G_{\theta_k}(s_t, a_t)\big\}
		\end{aligned}
	\end{equation}
	The state-action pair used to estimate $\nabla_\theta L$ is sampled from buffer $\mathcal{D}$ which use policy $\pi_\theta$ to collect, so $\mathbb{E}\left\{\delta \theta_{k+1} \mid \mathcal{F}_{\theta, k}\right\}=0$, which means $\{\theta_{k+1}\}$ is a Martingale difference sequence. Besides, the $\theta$-update in \eqref{eq: thetaup} is a stochastic approximation of ODE:
	\begin{equation}
		\dot \theta = - \nabla_\theta L(\theta, \xi)|_{\theta = \theta_k}
		\label{eq:odetheta}
	\end{equation}
	Now we prove that the Martingale difference is square integrable.
	\begin{equation}
		\begin{aligned}
			&\mathbb{E}\left\{\left\|\delta \theta_{k+1}\right\|^{2} \mid \mathcal{F}_{\theta, k}\right\}\\ \leq &
			2 \left\|d^{\pi_\theta}_\gamma(s)\pi(a|s)\right\|_\infty^2\left(\left\|G_{\theta_k}(s, a)\right\|_\infty^2+|G_{\theta_k}(s_t,a_t)|^2\right)\\
			\leq & 6 \left\|d^{\pi_\theta}_\gamma(s)\pi(a|s)\right\|_\infty^2\left\|G_{\theta_k}(s, a)\right\|_\infty^2
		\end{aligned}
		\label{eq: sequaretheta}
	\end{equation}
	Combining all the assumptions and derivations from \eqref{eq: thetaup} and \eqref{eq: sequaretheta} and invoking Theorem 2 in Chapter 2 of Borkar's book \cite{borkar2009stochastic} (stochastic approximation theory for continuous dynamic systems), the sequence $\{\theta_k\}$ converges almost surely to a fixed point $\theta^*$ for ODE \eqref{eq:odetheta}.
	
	Then we show that the fixed point $\theta^*$ is a stationary point using Lyapunov analysis.
	\begin{proposition}
		consider a Lyapunov function for dynamic system \eqref{eq:odetheta}: 
		\begin{equation}
			\mathcal{L}_{\xi}(\theta)=L\left(\theta, \xi\right)-L\left( \theta^{*}, \xi\right)
		\end{equation}
		where $\theta\in\Theta$ is a local minimum. 
		\begin{equation}
			d\mathcal{L}_{\xi}(\theta)/dt\leq0
		\end{equation}
	\end{proposition}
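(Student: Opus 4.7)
The plan is to verify the standard gradient-flow Lyapunov identity. Because $\theta^*$ is a fixed stationary point, the term $L(\theta^*,\xi)$ contributes nothing to the time derivative, so by the chain rule $d\mathcal{L}_\xi(\theta)/dt = \nabla_\theta L(\theta,\xi)^\top \dot\theta$. Substituting the ODE \eqref{eq:odetheta}, i.e.\ $\dot\theta = -\nabla_\theta L(\theta,\xi)$, collapses this to $-\|\nabla_\theta L(\theta,\xi)\|^2 \leq 0$, which is the desired inequality and yields strict decrease whenever the gradient is nonzero. The manipulation is one line once the ODE is treated as a continuous approximation of the stochastic update.

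Before writing the computation, I would record the two prerequisites that make $\mathcal{L}_\xi$ a legitimate Lyapunov function on a neighborhood of $\theta^*$: (i) $\mathcal{L}_\xi(\theta^*) = 0$ by construction, and (ii) $\mathcal{L}_\xi(\theta) \geq 0$ locally because $\theta^*$ is a local minimum of $L(\cdot,\xi)$ on $\Theta$. Differentiability in $\theta$ is available from the standing assumption that the parameterized networks are Lipschitz continuously differentiable, so the chain rule is applicable. I would then state the key identity as a single display and conclude $d\mathcal{L}_\xi(\theta)/dt \leq 0$, with equality exactly at the stationary points where $\nabla_\theta L = 0$.

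The main obstacle is not the calculation itself but the bookkeeping around the compact parameter domain $\Theta$ and the projection step in Algorithm~1. Strictly speaking, the update in the interior obeys \eqref{eq:odetheta}, but at the boundary the projection can make $\dot\theta \neq -\nabla_\theta L$. To keep the proposal clean I would restrict attention to a neighborhood of $\theta^*$ lying in the interior of $\Theta$, where the unconstrained gradient flow coincides with the projected flow, and defer the boundary case to the same projected-ODE machinery used in Borkar's Chapter~5. With that caveat the conclusion $d\mathcal{L}_\xi/dt \leq 0$ is immediate, and feeds directly into the subsequent step of the convergence argument that identifies $\theta^*$ as locally asymptotically stable for the slow-timescale recursion.
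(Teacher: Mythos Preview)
Your interior computation is exactly the paper's Case~1: by the chain rule and the ODE \eqref{eq:odetheta},
\[
\frac{d\mathcal{L}_\xi(\theta)}{dt}=\nabla_\theta L(\theta,\xi)^{\top}\dot\theta=-\|\nabla_\theta L(\theta,\xi)\|^2\le 0,
\]
with equality only when $\nabla_\theta L=0$. So that part is correct and matches the paper verbatim.

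The gap is your treatment of the boundary. You propose to ``restrict attention to a neighborhood of $\theta^*$ lying in the interior of $\Theta$'' and defer the projected case to Borkar. But nothing guarantees $\theta^*\in\mathrm{int}\,\Theta$; with a compact parameter domain and projection in the algorithm, the stationary point can sit on $\partial\Theta$, in which case no interior neighborhood exists and your restriction is vacuous. The paper does not sidestep this: its Case~2 handles $\theta\in\partial\Theta$ with the gradient pointing outward by replacing $\dot\theta$ with the projected direction
\[
\lim_{\eta\to 0}\frac{\Gamma_\Theta\{\theta-\eta\nabla_\theta L\}-\theta}{\eta},
\]
and then uses the obtuse-angle property of Euclidean projection (Bertsekas, Proposition~2.1.3), namely $(\theta_\eta-\Gamma_\Theta\{\theta_\eta\})^{\top}(\theta-\Gamma_\Theta\{\theta_\eta\})\le 0$, to conclude $(\theta-\theta_\eta)^{\top}(\Gamma_\Theta\{\theta_\eta\}-\theta)\le 0$ and hence $dL/dt\le 0$ on the boundary as well. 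That projection inequality is the missing idea in your proposal; without it the Lyapunov decrease is not established at boundary points, and in particular not at a boundary $\theta^*$.
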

	\begin{proof}
		As we assume that $\theta \in \Theta$ which is a compact set, if $\theta$ lies on the boundary and gradient direction points out of $\Theta$, the update should project $\theta$ into $\Theta$. Then we seperate the proof into two cases:
		
		\textit{Case 1:} $\theta\notin \partial\Theta$ or $\theta\in\partial\Theta$, and the gradient  points inside $\Theta$. Then we have 
		\begin{equation}
			\frac{d L(\theta, \xi)}{d t}=-\left\|\nabla_{\theta} L(\theta, \xi)\right\|^{2} \leq 0 \label{eq:descent}
		\end{equation}
		The equality holds only when $\nabla_{\theta} L(\theta, \xi) =0$.
		
		\textit{Case 2:} $\theta\in\partial\Theta$, and the gradient points outside $\Theta$, and the time derivative of Lyapunov function is 
		\begin{equation}
			\frac{d L(\theta, \xi)}{d t}=\nabla_{\theta} L(\theta, \xi)^T \lim_{\eta\to0} \frac{\Gamma_\Theta\left\{\theta - \nabla_\theta\eta L(\theta, \xi)\right\} - \theta}{\eta} \label{eq:descent}
		\end{equation}
		where $\Gamma_\Theta\{\cdot\}$ is to project a vector to $\theta$. Denote $\theta_\eta = \theta - \eta\nabla_\theta L(\theta, \xi)$ for a positive $\eta$. According to the proposition 2.1.3 in \cite{bertsekas1997nonlinear}
		\begin{equation}
			(\theta_\eta - \Gamma_\Theta\left\{\theta_\eta\right\})^T(\theta - \Gamma_\Theta\left\{\theta_\eta\right\}) \leq 0,\quad\forall\theta\in\Theta
		\end{equation}
		which further implies that
		\begin{equation}
			\begin{aligned}
				&(\theta -\theta_\eta )^T( \Gamma_\Theta\left\{\theta_\eta\right\} - \theta)\\
				=&(\theta_\eta - \Gamma_\Theta\left\{\theta_\eta\right\} + \Gamma_\Theta\left\{\theta_\eta\right\} - \theta)^T(\theta - \Gamma_\Theta\left\{\theta_\eta\right\}) \\
				=&-\|\theta - \Gamma_\Theta\left\{\theta_\eta\right\}\|^2 + (\theta_\eta - \Gamma_\Theta\left\{\theta_\eta\right\})^T(\theta - \Gamma_\Theta\left\{\theta_\eta\right\}) \leq 0
			\end{aligned}
		\end{equation}
		Then the time derivative is
		\begin{equation}
			\begin{aligned}
					\frac{d L(\theta, \xi)}{d t}= & \frac{(\theta - \theta_\eta)^T}{\eta}\lim_{\eta\to0}\frac{\Gamma_\Theta\left\{\theta_\eta\right\} - \theta}{\eta} \\
					= & \frac{1}{\eta} \lim_{\eta\to0} (\theta -\theta_\eta )^T( \Gamma_\Theta\left\{\theta_\eta\right\} - \theta) \leq 0
			\end{aligned}
		\end{equation}
		for positive $\eta$. The equality only holds when the directional derivative term $\|\lim_{\eta\to0} \frac{\Gamma_\Theta\left\{\theta - \nabla_\theta\eta L(\theta, \xi)\right\} - \theta}{\eta}\|=0$. Therefore, $\theta^*$ is a stationary point of dynamic system $\theta(t)$.
	\end{proof}
	Combining the aforementioned conclusions, $\{\theta_k\}$ converges almost surely to a local minimum point $\theta^*$ for given $\xi$.
	\begin{step}[Convergence of $\lambda$ update] 
	\end{step}
	The stochastic gradient with respect to $\xi$ is 
		\begin{equation}
		G_\lambda(s_t, a_t) = \hat{\nabla} L_\xi(\theta^*_{\xi},\xi)= \left(J_{w_c}(s_t, a_t)\right) \nabla_{\xi} \lambda_{\xi}\left(s_{t}\right)
	\end{equation}
	\begin{proposition}
		$\nabla_\xi L(\theta^*_{\xi},\xi)$ is Lipschitz on $\xi$.
	\end{proposition}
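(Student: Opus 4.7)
The plan is to bound $\|\nabla_\xi L(\theta^*_{\xi_1},\xi_1) - \nabla_\xi L(\theta^*_{\xi_2},\xi_2)\|$ by a constant multiple of $\|\xi_1-\xi_2\|$ by decomposing the difference along the three $\xi$-dependent ingredients of the gradient. Writing the gradient explicitly as
\begin{equation*}
\nabla_\xi L(\theta^*_\xi,\xi) = \mathbb{E}_{s\sim \initdist_i}\Bigl\{\mathbb{E}_{a\sim\pi_{\theta^*_\xi}(\cdot|s)}\bigl[Q_{w_c}(s,a)\bigr]\,\nabla_\xi \lambda_\xi(s)\Bigr\},
\end{equation*}
the $\xi$-dependence enters through (i) the inner optimum $\theta^*_\xi$, which controls the sampling distribution over $a$, (ii) the multiplier network $\lambda_\xi(s)$, and (iii) its gradient $\nabla_\xi\lambda_\xi(s)$. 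By the two-timescale separation in Assumption~\ref{assumption:1}, $Q_{w_c}$ is treated as a Lipschitz function of $(s,a)$ with values bounded uniformly on the compact parameter sets.

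First I would establish Lipschitz continuity of the map $\xi\mapsto\theta^*_\xi$. The cleanest route is the implicit function theorem applied to the first-order stationarity condition $\nabla_\theta L(\theta^*_\xi,\xi)=0$ guaranteed by Step~1: at the local minimum $\theta^*_\xi$ in the interior of $\Theta$ the Hessian $\nabla^2_\theta L(\theta^*_\xi,\xi)$ is nonsingular, so
\begin{equation*}
\nabla_\xi \theta^*_\xi = -\bigl(\nabla^2_\theta L(\theta^*_\xi,\xi)\bigr)^{-1}\,\nabla^2_{\theta\xi} L(\theta^*_\xi,\xi),
\end{equation*}
a continuous matrix-valued map on the compact $\Xi$, hence bounded; this yields Lipschitz continuity on $\Xi$. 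If $\theta^*_\xi$ lies on $\partial\Theta$, the projected-flow argument of Step~1 reduces the analysis to a lower-dimensional active face of $\Theta$ on which the same reasoning applies.

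Second I would apply the triangle inequality and product rule to split the difference $\nabla_\xi L(\theta^*_{\xi_1},\xi_1)-\nabla_\xi L(\theta^*_{\xi_2},\xi_2)$ into three terms, one for each of (i)--(iii). The term from (iii) is controlled by a second-order smoothness/Lipschitz assumption on the multiplier network, consistent with the differentiability assumption. The term from (i) uses the Lipschitz continuity of $\theta\mapsto\mathbb{E}_{a\sim\pi_\theta(\cdot|s)}[Q_{w_c}(s,a)]$, which follows from the squashed-Gaussian Lipschitz property already invoked for Lemma~1 together with boundedness of $Q_{w_c}$, composed with the Lipschitz map $\xi\mapsto\theta^*_\xi$. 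The term from (ii) piggybacks on the same composition since $\nabla_\xi\lambda_\xi$ is uniformly bounded on $\Xi$. Combining and taking expectation under the fixed $\initdist_i$ (which does not depend on $\xi$) collapses everything into a single Lipschitz constant.

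The main obstacle I expect is rigorously justifying the implicit-function step: a neural-network Lagrangian generically has multiple critical points and the Hessian $\nabla^2_\theta L$ need not be globally invertible. The argument therefore rests critically on the \emph{local} nature of the Step~1 conclusion (we only need Lipschitz behavior in a neighborhood of the attractor $\theta^*_\xi$) and on the projection mechanism in Algorithm~1 that confines iterates to a compact region where the non-degeneracy can be postulated. Once the local Lipschitz map $\xi\mapsto\theta^*_\xi$ is in hand, the remaining composition is routine under the already-assumed Lipschitz continuity of all neural-network approximations.
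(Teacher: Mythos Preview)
Your proposal is substantially more elaborate than the paper's own argument, which consists of two sentences: it notes that $J_{w_c}(s_t,a_t)$ (equivalently $Q_{w_c}$) takes finite values by the safety-index parameterization, then appeals to compactness of $\Xi$ and continuous differentiability of the multiplier network to conclude Lipschitzness. In effect the paper treats the gradient as a product ``bounded scalar $\times$ smooth-in-$\xi$ factor'' and stops there.

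The key difference is that you explicitly track the dependence of the gradient on $\xi$ through the inner optimum $\theta^*_\xi$, and you propose to control it via the implicit function theorem on the stationarity condition $\nabla_\theta L(\theta^*_\xi,\xi)=0$. The paper's proof simply does not address this channel at all. What your route buys is genuine rigor: the action $a_t\sim\pi_{\theta^*_\xi}$ appearing inside $Q_{w_c}(s_t,a_t)$ really does depend on $\xi$, and ignoring that dependence leaves a gap in the paper's sketch. What it costs you is the extra hypotheses you yourself flag as obstacles---nondegeneracy of $\nabla_\theta^2 L$ at the attractor and second-order smoothness of $\lambda_\xi$---neither of which is stated in Assumption~2 (which only asserts Lipschitz continuity of the networks, not of their gradients). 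The paper sidesteps these by not engaging with the issue; your treatment is more honest about what is actually needed, and your acknowledgment that the implicit-function step rests on a \emph{local} non-degeneracy postulate near the Step~1 attractor is the right way to reconcile the argument with the paper's assumption set.
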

	\begin{proof}
		$J_{w_c}(s_t, a_t)$ takes finite values by $\eqref{eq:sis}$. Considering that $\Xi$ is a compact set and the continuous differentiable assumption of neural network, the $\nabla_\xi L(\theta^*_{\xi},\xi)$ is Lipschitz.
	\end{proof}
	Similarly, the $\xi$ update can also be formulated as:
	\begin{equation}
		\xi_{k+1} = {\hat \nabla}_{\xi} L(\theta,\xi)|_{\theta=\theta_k,\xi=\xi_{k}} + \delta\xi_{k+1} +\delta\theta_{\epsilon}
	\end{equation}
	where $\theta_k = \theta^*_{\xi} + \epsilon_\theta$, $\epsilon_\theta$ is an estimation error in $\theta$ update.
	Similarly, the error term
	\begin{equation}
		\begin{aligned}
			\delta \xi_{k+1} = {\hat \nabla}_{\xi} L(\theta^*(\xi),\xi)|_{\xi=\xi_{k}} - \mathbb{E}_{s\sim d^{\pi_{\theta}}_\gamma}\big\{\mathbb{E}_{a\sim\pi_\theta} G_{\xi_k}(s_t, a_t)\big\}
		\end{aligned}
	\end{equation}
	is square integrable since
	\begin{small}
			\begin{equation}
			\begin{aligned}
				&\mathbb{E}\left\{\left\|\delta \xi_{k+1}\right\|^{2} \mid \mathcal{F}_{\xi, k}\right\}\\ \leq &
				4 \left\|d^{\pi_\theta}_\gamma(s)\pi(a|s)\right\|_\infty^2\left(\max \left\|Q_{w_{C}}^{\pi_\theta^*}\left(s_{t}, a_{t}\right)\right\|^2+d^2\right)\left\|\nabla_\xi\lambda_\xi(s_t)\right\|_\infty^2\\
			\end{aligned}
		\end{equation}
	\end{small}
	The error term cased by $\theta$ estimation error is 
	\begin{equation}
		\begin{aligned}
			\delta \theta_\epsilon = & {\hat \nabla}_{\xi} L(\theta,\xi)|_{\theta=\theta_k, \xi=\xi_{k}} - {\hat \nabla}_{\xi} L(\theta^*(\xi),\xi)|_{\xi=\xi_{k}}\\
			= & \left(Q^{\pi_{\theta_k}}(s_t,a_t) - Q^{\pi_{\theta^*}}(s_t,a_t)\right)\nabla_\xi\lambda_\xi(s_t)\\
			= & \left(\nabla_aQ(s_t, a_t)\nabla_\theta\pi(s_t)\epsilon_{\theta_k} +o(\left\|\epsilon_{\theta_k}\right\|) \right)\nabla_\xi\lambda_\xi(s_t)
		\end{aligned}
	\end{equation}
	Therefore, $\|\delta\theta_\epsilon\|\to 0$ as $\|\epsilon_\theta\|\to0$. 
	Combining the Lipschitz continuity and error property, one can invoke Theorem 2 in Chapter 6 of Borkar's book \cite{borkar2009stochastic} to show that the sequence $\{\xi_k\}$ converges to the solution of following ODE:
	\begin{equation}
		\dot{\xi} = - \nabla_\xi L(\theta, \xi)|_{\theta = \theta^*(\xi)}
	\end{equation}
	
	Then we show that the fixed point $\xi^*$ is a stationary point using Lyapunov analysis. Note that we have to take $\epsilon_\theta$ into considerations.
	\begin{proposition}
		For the dynamic system with error term
		\begin{equation}
			\dot{\xi} = - \nabla_\xi L(\theta, \xi)|_{\theta = \theta^*(\xi)+\epsilon_{\theta}}
		\end{equation}
		Define a Lyapunov function to be
		\begin{equation}
			\mathcal{L}(\xi)=L\left(\theta^*, \xi\right)-L\left( \theta^{*}, \xi^{*}\right)
		\end{equation}
		where $\xi^*$ is a local maximum point. Then $\frac{d\mathcal{L}(\xi)}{dt}\leq 0$.
	\end{proposition}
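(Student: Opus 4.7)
The plan is to follow the blueprint of the Lyapunov argument used for the $\theta$ step, but with two new wrinkles: the perturbation $\epsilon_\theta$ entering the vector field and the fact that $\xi$ sits on the slow timescale of the two-timescale scheme. First I would differentiate $\mathcal{L}(\xi)$ along trajectories. Because $\theta^{*}(\xi)$ is a stationary point of $L(\cdot,\xi)$ by Step 1, the envelope identity $\nabla_\theta L(\theta^{*}(\xi),\xi)=0$ makes the total derivative collapse, so $\tfrac{d}{dt}L(\theta^{*}(\xi),\xi)=\nabla_\xi L(\theta^{*}(\xi),\xi)^{\top}\dot\xi$. Substituting the ODE gives
\begin{equation*}
    \frac{d\mathcal{L}(\xi)}{dt}= -\,\nabla_\xi L(\theta^{*}(\xi),\xi)^{\top}\,\nabla_\xi L\bigl(\theta^{*}(\xi)+\epsilon_\theta,\xi\bigr).
\end{equation*}

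Next I would split the right-hand side into a clean quadratic plus a perturbation by adding and subtracting $\nabla_\xi L(\theta^{*}(\xi),\xi)$, obtaining $-\|\nabla_\xi L(\theta^{*},\xi)\|^{2}$ minus a cross term. The cross term is controlled by Lipschitz continuity of $\nabla_\xi L$ in $\theta$ (which in turn follows from the Lipschitz/compactness assumption on all networks and the boundedness of $Q_{w_c}$ proved in the previous proposition), yielding a bound of order $\|\nabla_\xi L(\theta^{*},\xi)\|\cdot K\|\epsilon_\theta\|$. Under Assumption~\ref{assumption:1} the timescale separation $\beta_\theta(k)=o(\beta_\xi(k))$ implies $\|\epsilon_\theta\|\to 0$ along the trajectory, so eventually the negative quadratic dominates and $d\mathcal{L}/dt\le 0$, with equality only when $\nabla_\xi L(\theta^{*},\xi)=0$, i.e.\ at the stationary point $\xi^{*}$.

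For the boundary case I would replicate the two-case dissection used previously for $\theta$. If $\xi\notin\partial\Xi$ or the field points inward, the calculation above applies directly. If $\xi\in\partial\Xi$ and the field points outward, the update is replaced by the projected increment $\lim_{\eta\to 0}\eta^{-1}\bigl(\Gamma_\Xi\{\xi+\eta\nabla_\xi L\}-\xi\bigr)$, and I would invoke Proposition~2.1.3 of \cite{bertsekas1997nonlinear} exactly as in Step~1 to conclude that the inner product of the nominal field with the projected direction is still non-positive (after the perturbation correction), preserving the Lyapunov inequality.

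The main obstacle I expect is the perturbation term: I need the error bound on $\|\epsilon_\theta\|$ to eventually win against the quadratic, and this requires carefully combining the timescale assumption with the Lipschitz constant. Concretely, one must argue that near $\xi^{*}$ the gradient $\|\nabla_\xi L(\theta^{*},\xi)\|$ shrinks at a rate comparable to $\|\xi-\xi^{*}\|$, while $\|\epsilon_\theta\|$ vanishes faster than $\beta_\xi$, so that the cross term is eventually dominated. The rest of the argument (envelope identity, Lipschitz bound, projection lemma) is a routine adaptation of the machinery already deployed in the $\theta$ Lyapunov proposition, so once the perturbation is tamed the conclusion $d\mathcal{L}/dt\le 0$ — and hence local asymptotic stability of $\xi^{*}$ — follows, completing Step~2 of the three-step outline for Theorem~\ref{theorem:major}.
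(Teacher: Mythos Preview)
Your proposal is correct and follows essentially the same route as the paper: differentiate $\mathcal{L}$ along the perturbed flow, use the stationarity of $\theta^{*}(\xi)$ to reduce the total derivative to $\nabla_\xi L(\theta^{*},\xi)^{\top}\dot\xi$, split the resulting inner product into $-\|\nabla_\xi L(\theta^{*},\xi)\|^{2}$ plus a cross term, bound the cross term by Lipschitz continuity as $K_1\|\epsilon_\theta\|\,\|\nabla_\xi L(\theta^{*},\xi)\|$, and finally invoke the timescale separation to conclude that the $\epsilon_\theta$ contribution is dominated. The paper makes this last step explicit by writing $\|\nabla_\xi L(\theta^{*},\xi)\|=K_2\|\epsilon_\xi\|+o(\|\epsilon_\xi\|)$ and comparing $K_1\|\epsilon_\theta\|$ against $K_2\|\epsilon_\xi\|$, which is exactly the ``gradient shrinks like $\|\xi-\xi^{*}\|$ while $\epsilon_\theta$ vanishes faster'' heuristic you describe; your treatment of the boundary case via the projection lemma is also what the paper intends when it says ``similar to Proposition~2.''
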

	\begin{proof}
		The proof is similar to Proposition 2, only the error of $\theta$ should be considered. We prove that the error of $\theta$ does not affect the decreasing property here:
		\begin{small}
			\begin{equation}
				\begin{aligned}
					&\frac{d\mathcal{L}(\xi)}{dt} =  - \left(\nabla_\xi L(\theta, \xi)|_{\theta = \theta^*(\xi)+\epsilon_{\theta}} \right)^T \nabla_\xi L(\theta, \xi)|_{\theta = \theta^*(\xi)} \\
					= & - \left\|\nabla_\xi L(\theta, \xi)|_{\theta = \theta^*(\xi)}\right\|^2 - \delta\theta_\epsilon^T \nabla_\xi L(\theta, \xi)|_{\theta = \theta^*(\xi)}\\
					%			\leq & - \left\|\nabla_\xi L(\theta, \xi)|_{\theta = \theta^*(\xi)}\right\|^2 + 
					%			\left\|\delta\theta_\epsilon\right\| \left\|\nabla_\xi L(\theta, \xi)|_{\theta = \theta^*(\xi)}\right\|\\
					\leq & - \left\|\nabla_\xi L(\theta, \xi)|_{\theta = \theta^*(\xi)}\right\|^2 + K_1 \left\|\epsilon_\theta\right\|\left\|\nabla_\xi L(\theta, \xi)|_{\theta = \theta^*(\xi)}\right\|\\
					= & \left(- K_2\left\|\epsilon_\xi\right\| + o(\left\|\epsilon_\xi\right\|) + K_1 \left\|\epsilon_\theta\right\| \right) \left\|\nabla_\xi L(\theta, \xi)|_{\theta = \theta^*(\xi)}\right\|
				\end{aligned}
			\end{equation}
		\end{small}
		As $\theta$ converges faster than $\xi$, $d\mathcal{L}_{\zeta}(\xi)/dt\leq0$, so there exists trajectory $\xi(t)$ converges to $\xi^*$ if initial state $\xi_0$ starts from a ball $\mathcal{B}_{\xi^*}$ around $\xi^*$ according to the asymptotically stable systems.
	\end{proof}
	\begin{table}[htp]
	\vskip 0.15in
	\begin{center}
		\caption{Detailed hyperparameters.}
		\label{table.hyper}
		\begin{tabular}{lc}
			\toprule
			Algorithm & Value \\
			\hline
			\emph{SSAC} & \\
			\quad Optimizer &  Adam ($\beta_{1}=0.9, \beta_{2}=0.999$)\\
			\quad Approximation function  &Multi-layer Perceptron \\
			\quad Number of hidden layers & 2\\
			\quad Number of hidden units & 256\\
			\quad Nonlinearity of hidden layer& ELU\\
			\quad Nonlinearity of output layer& linear\\
			\quad Actor learning rate & Linear annealing $3{\rm{e-}}5\rightarrow1{\rm{e-}}6 $\\
			\quad Critic learning rate & Linear annealing $8{\rm{e-}}5\rightarrow1{\rm{e-}}6 $\\
			\quad  Learning rate of multiplier net & Linear annealing $5{\rm{e-}}5\rightarrow5{\rm{e-}}6 $ \\
			\quad  Learning rate of $\alpha$ & Linear annealing $5{\rm{e-}}5\rightarrow1{\rm{e-}}6 $ \\
			\quad Reward discount factor ($\gamma$) & 0.99\\
			\quad Policy update interval ($m_\pi$) & 3\\
			\quad  Multiplier ascent interval ($m_\lambda$)& 12\\
			\quad Target smoothing coefficient ($\tau$) & 0.005\\
			\quad Max episode length ($N$) & \\
			\quad\quad Safe exploration tasks & 1000\\
			\quad\quad Autonomous driving task & 200\\
			\quad  Expected entropy ($\overline{\mathcal{H}}$) &  $\overline{\mathcal{H}}=-$Action Dimentions \\
			\quad  Replay buffer size & $5\times10^5$\\
			\quad  Replay batch size & 256\\
			\quad Safety index hyperparameters  & \\
			\quad $(\eta, n, k, \sigma)$ & $(0,\ 2,\ 1,\ 0.04)$\\\midrule
			\emph{CPO, TRPO-Lagrangian} &\\ 
			\quad Max KL divergence&  $0.1$\\
			\quad Damping coefficient&  $0.1$\\
			\quad Backtrack coefficient&  $0.8$\\
			\quad Backtrack iterations&  $10$\\
			\quad Iteration for training values&  $80$\\
			\quad Init $\lambda$ &  $0.268 (softplus(0))$\\
			\quad GAE parameters  &  $0.95$\\
			\quad Batch size &  $2048$\\
			\quad Max conjugate gradient iters & $ 10$ \\
			
			\hline
			\emph{PPO-Lagrangian} &\\ 
			\quad Clip ratio &  $0.2$\\
			\quad KL margin &  $1.2$\\
			\quad Mini Bactch Size & $64$\\
			\bottomrule
		\end{tabular}
	\end{center}
	\vskip -0.1in
\end{table}
	\begin{step}[Local saddle point $(\theta^*, \xi^*)$]
	\end{step}
	As we provided in previous, $L(\theta^*,\xi^*)\leq L(\theta,\xi^*)$, so we need to prove here $L(\theta^*,\xi^*) \geq L(\theta^*,\xi)$. 
	To complete the proof we need that 
	\begin{equation}
		J^{\pi_{\theta^*}}_{w_c}(s_t,a_t)\leq d \text{ and } \lambda^*(s_t)(J^{\pi_{\theta^*}}_{w_c}(s_t,a_t)-d)=0
	\end{equation}
	for all $s$ in the discounted distribution. Recall that $\lambda^*$ is a local maximum point, we have
	\begin{equation}
		\nabla_\xi L(\theta^*,\xi^*) = 0
	\end{equation}
	Assume there exists $s_t$ so that $J^{\pi_{\theta^*}}_{w_c}(s_t,\pi^*(s_t)) > 0$. Then for $\lambda^*$ we have 
	\begin{equation}
		\nabla_\xi L(\theta^*,\xi^*) = d_\gamma^{\pi_\theta^*}(s_t)J^{\pi_{\theta^*}}_{w_c}(s_t,\pi^*(s_t))\nabla_\xi\lambda_\xi(s_t) \neq 0
	\end{equation}
	The second part only requires that $\lambda^*(s_t)=0$ when $J^{\pi_{\theta^*}}_{w_c}(s_t,\pi^*(s_t)) < 0$. Similarly, we assume that there exists $s_t$ and $\xi^*$ where $\lambda_{\xi^*}(s_t) > 0$ and $Q^{\pi_{\theta^*}}(s_t,\pi^*(s_t)) < d$. 
	there must exists a $\xi_0$ subject to
	\begin{equation}
		\xi_0 = \xi^* + \eta_0 d_\gamma^{\pi_{\theta^*}}(s_t)\left(J^{\pi_{\theta^*}}_{w_c}(s_t,\pi^*(s_t)) - d\right)\nabla_\xi\lambda_\xi(s_t)
	\end{equation}
	for any $\eta\in(0,\eta_0]$ where $\eta_0\geq0$. It contradicts the statement the local maximum $\xi^*$.
	Then we get
	\begin{equation}
		\begin{aligned}
			L(\theta^*,\xi^*) & = J_r(\theta^*) + \mathbb E_{s\sim d_\gamma}\left\{\lambda_{\xi^*}(s)\mathbb{E}_{a\sim\pi}\{J^{\pi_{\theta^*}}_{w_c}(s,a)\}\right\} \\
			& = J_r(\theta^*) \\
			&\geq J_r(\theta^*) +  \mathbb E_{s\sim d_\gamma}\left\{\lambda_{\xi^*}(s)\mathbb{E}_{a\sim\pi}\{J^{\pi_{\theta^*}}_{w_c}(s,a)\}\right\} \\
			& = L(\theta^*,\xi)
		\end{aligned}
	\end{equation}
	So $(\theta^*, \xi^*)$ is a locally saddle point.
\subsection{Experimental Details}
\label{sec:appendixexp}
\subsubsection{Implementation Details}
Implementation of FAC are based on the Parallel Asynchronous Buffer-Actor-Learner (PABAL) architecture proposed by \cite{guan2021mixed}. All experiments are implemented on 2.5 GHz Intel Xeon Gold 6248 processors with 12 parallel actors, including 4 workers to sample, 4 buffers to store data and 4 learners to compute gradients.\footnote{%Our open-source implementation of FAC can be found at \url{https://github.com/mahaitongdae/Feasible-Actor-Critic}. 
	The original implementation of PABAL can be found at \url{https://github.com/idthanm/mpg}.  The baseline implementation is modified from \url{https://github.com/openai/safety-starter-agents} and \url{https://github.com/ikostrikov/pytorch-a2c-ppo-acktr-gail}.} The simulation environments runs on a personal computer with 3.0 GHz AMD Ryzen 4900HS processors with 16 GB memory.
\subsubsection{Hyperparameters}
The hyperparameters of FAC and baseline algorithms are listed in Table \ref{table.hyper}.

\bibliographystyle{IEEEtran}
\bibliography{tai.bib}

% Generated by IEEEtran.bst, version: 1.14 (2015/08/26)
\begin{thebibliography}{10}
\providecommand{\url}[1]{#1}
\csname url@samestyle\endcsname
\providecommand{\newblock}{\relax}
\providecommand{\bibinfo}[2]{#2}
\providecommand{\BIBentrySTDinterwordspacing}{\spaceskip=0pt\relax}
\providecommand{\BIBentryALTinterwordstretchfactor}{4}
\providecommand{\BIBentryALTinterwordspacing}{\spaceskip=\fontdimen2\font plus
\BIBentryALTinterwordstretchfactor\fontdimen3\font minus
  \fontdimen4\font\relax}
\providecommand{\BIBforeignlanguage}[2]{{%
\expandafter\ifx\csname l@#1\endcsname\relax
\typeout{** WARNING: IEEEtran.bst: No hyphenation pattern has been}%
\typeout{** loaded for the language `#1'. Using the pattern for}%
\typeout{** the default language instead.}%
\else
\language=\csname l@#1\endcsname
\fi
#2}}
\providecommand{\BIBdecl}{\relax}
\BIBdecl

\bibitem{silver2016mastering}
D.~Silver, A.~Huang, C.~J. Maddison, A.~Guez, L.~Sifre, G.~Van Den~Driessche,
  J.~Schrittwieser, I.~Antonoglou, V.~Panneershelvam, M.~Lanctot \emph{et~al.},
  ``Mastering the game of go with deep neural networks and tree search,''
  \emph{nature}, vol. 529, no. 7587, pp. 484--489, 2016.

\bibitem{mnih2013playing}
V.~Mnih, K.~Kavukcuoglu, D.~Silver, A.~Graves, I.~Antonoglou, D.~Wierstra, and
  M.~Riedmiller, ``Playing atari with deep reinforcement learning,''
  \emph{arXiv preprint arXiv:1312.5602}, 2013.

\bibitem{vinyals2019grandmaster}
O.~Vinyals, I.~Babuschkin, W.~M. Czarnecki, M.~Mathieu, A.~Dudzik, J.~Chung,
  D.~H. Choi, R.~Powell, T.~Ewalds, P.~Georgiev \emph{et~al.}, ``Grandmaster
  level in starcraft ii using multi-agent reinforcement learning,''
  \emph{Nature}, vol. 575, no. 7782, pp. 350--354, 2019.

\bibitem{haarnoja2018softb}
T.~Haarnoja, A.~Zhou, K.~Hartikainen, G.~Tucker, S.~Ha, J.~Tan, V.~Kumar,
  H.~Zhu, A.~Gupta, P.~Abbeel \emph{et~al.}, ``Soft actor-critic algorithms and
  applications,'' \emph{arXiv preprint arXiv:1812.05905}, 2018.

\bibitem{chen2021interpretable}
J.~Chen, S.~E. Li, and M.~Tomizuka, ``Interpretable end-to-end urban autonomous
  driving with latent deep reinforcement learning,'' \emph{IEEE Transactions on
  Intelligent Transportation Systems}, 2021.

\bibitem{richter2019open}
F.~Richter, R.~K. Orosco, and M.~C. Yip, ``Open-sourced reinforcement learning
  environments for surgical robotics,'' \emph{arXiv preprint arXiv:1903.02090},
  2019.

\bibitem{altman1999constrained}
E.~Altman, \emph{Constrained Markov decision processes}.\hskip 1em plus 0.5em
  minus 0.4em\relax CRC Press, 1999, vol.~7.

\bibitem{geibel2005risk}
P.~Geibel and F.~Wysotzki, ``Risk-sensitive reinforcement learning applied to
  control under constraints,'' \emph{Journal of Artificial Intelligence
  Research}, vol.~24, pp. 81--108, 2005.

\bibitem{achiam2017constrained}
J.~Achiam, D.~Held, A.~Tamar, and P.~Abbeel, ``Constrained policy
  optimization,'' in \emph{International Conference on Machine Learning}.\hskip
  1em plus 0.5em minus 0.4em\relax Sydney, Australia: PMLR, 2017, pp. 22--31.

\bibitem{ray2019benchmarking}
A.~Ray, J.~Achiam, and D.~Amodei, ``Benchmarking safe exploration in deep
  reinforcement learning,'' \emph{arXiv preprint arXiv:1910.01708}, 2019.

\bibitem{yang2020projection}
T.-Y. Yang, J.~Rosca, K.~Narasimhan, and P.~J. Ramadge, ``Projection-based
  constrained policy optimization,'' \emph{arXiv preprint arXiv:2010.03152},
  2020.

\bibitem{tessler2018reward}
C.~Tessler, D.~J. Mankowitz, and S.~Mannor, ``Reward constrained policy
  optimization,'' \emph{arXiv preprint arXiv:1805.11074}, 2018.

\bibitem{yang2021wcsac}
Q.~Yang, T.~D. Sim{\~a}o, S.~H. Tindemans, and M.~T. Spaan, ``Wcsac: Worst-case
  soft actor critic for safety-constrained reinforcement learning,'' in
  \emph{Proceedings of the Thirty-Fifth AAAI Conference on Artificial
  Intelligence.}, Online, 2021.

\bibitem{chow2017risk}
Y.~Chow, M.~Ghavamzadeh, L.~Janson, and M.~Pavone, ``Risk-constrained
  reinforcement learning with percentile risk criteria,'' \emph{The Journal of
  Machine Learning Research}, vol.~18, no.~1, pp. 6070--6120, 2017.

\bibitem{berkenkamp2017safe}
F.~Berkenkamp, M.~Turchetta, A.~P. Schoellig, and A.~Krause, ``Safe model-based
  reinforcement learning with stability guarantees,'' \emph{arXiv preprint
  arXiv:1705.08551}, 2017.

\bibitem{duan2019deep}
J.~Duan, Z.~Liu, S.~E. Li, Q.~Sun, Z.~Jia, and B.~Cheng, ``Deep adaptive
  dynamic programming for nonaffine nonlinear optimal control problem with
  state constraints,'' \emph{arXiv preprint arXiv:1911.11397}, 2019.

\bibitem{chen2017constrained}
J.~Chen, W.~Zhan, and M.~Tomizuka, ``Constrained iterative lqr for on-road
  autonomous driving motion planning,'' in \emph{2017 IEEE 20th International
  Conference on Intelligent Transportation Systems (ITSC)}, Yokohama, JAPAN,
  2017, pp. 1--7.

\bibitem{ames2019control}
A.~D. Ames, S.~Coogan, M.~Egerstedt, G.~Notomista, K.~Sreenath, and P.~Tabuada,
  ``Control barrier functions: Theory and applications,'' in \emph{2019 18th
  European Control Conference (ECC)}.\hskip 1em plus 0.5em minus 0.4em\relax
  Bochum, Germany: IEEE, 2019, pp. 3420--3431.

\bibitem{cheng2019end}
R.~Cheng, G.~Orosz, R.~M. Murray, and J.~W. Burdick, ``End-to-end safe
  reinforcement learning through barrier functions for safety-critical
  continuous control tasks,'' in \emph{Proceedings of the AAAI Conference on
  Artificial Intelligence}, vol.~33, no.~01, 2019, pp. 3387--3395.

\bibitem{ma2021model}
H.~Ma, J.~Chen, S.~E. Li, Z.~Lin, and S.~Zheng, ``Model-based constrained
  reinforcement learning using generalized control barrier function,''
  \emph{arXiv preprint arXiv:2103.01556}, 2021.

\bibitem{qin2021learning}
Z.~Qin, K.~Zhang, Y.~Chen, J.~Chen, and C.~Fan, ``Learning safe multi-agent
  control with decentralized neural barrier certificates,'' \emph{arXiv
  preprint arXiv:2101.05436}, 2021.

\bibitem{liu2014control}
C.~Liu and M.~Tomizuka, ``Control in a safe set: Addressing safety in
  human-robot interactions,'' in \emph{Dynamic Systems and Control Conference},
  vol. 46209.\hskip 1em plus 0.5em minus 0.4em\relax American Society of
  Mechanical Engineers, 2014, p. V003T42A003.

\bibitem{zhao2021modelfree}
\BIBentryALTinterwordspacing
W.~Zhao, T.~He, and C.~Liu, ``Model-free safe control for zero-violation
  reinforcement learning,'' in \emph{5th Annual Conference on Robot Learning},
  2021. [Online]. Available: \url{https://openreview.net/forum?id=UGp6FDaxB0f}
\BIBentrySTDinterwordspacing

\bibitem{wei2019safe}
T.~Wei and C.~Liu, ``Safe control algorithms using energy functions: A uni ed
  framework, benchmark, and new directions,'' in \emph{2019 IEEE 58th
  Conference on Decision and Control (CDC)}.\hskip 1em plus 0.5em minus
  0.4em\relax IEEE, 2019, pp. 238--243.

\bibitem{fisac2018general}
J.~F. Fisac, A.~K. Akametalu, M.~N. Zeilinger, S.~Kaynama, J.~Gillula, and
  C.~J. Tomlin, ``A general safety framework for learning-based control in
  uncertain robotic systems,'' \emph{IEEE Transactions on Automatic Control},
  vol.~64, no.~7, pp. 2737--2752, 2018.

\bibitem{dalal2018safe}
G.~Dalal, K.~Dvijotham, M.~Vecerik, T.~Hester, C.~Paduraru, and Y.~Tassa,
  ``Safe exploration in continuous action spaces,'' \emph{arXiv preprint
  arXiv:1801.08757}, 2018.

\bibitem{ma2021feasible}
H.~Ma, Y.~Guan, S.~E. Li, X.~Zhang, S.~Zheng, and J.~Chen, ``Feasible
  actor-critic: Constrained reinforcement learning for ensuring statewise
  safety,'' \emph{arXiv preprint arXiv:2105.10682}, 2021.

\bibitem{uchibe2007constrained}
E.~Uchibe and K.~Doya, ``Constrained reinforcement learning from intrinsic and
  extrinsic rewards,'' in \emph{2007 IEEE 6th International Conference on
  Development and Learning}.\hskip 1em plus 0.5em minus 0.4em\relax Lugano,
  Switzerland: IEEE, 2007, pp. 163--168.

\bibitem{stooke2020responsive}
A.~Stooke, J.~Achiam, and P.~Abbeel, ``Responsive safety in reinforcement
  learning by pid lagrangian methods,'' in \emph{International Conference on
  Machine Learning}.\hskip 1em plus 0.5em minus 0.4em\relax Online: PMLR, 2020,
  pp. 9133--9143.

\bibitem{peng2021separated}
B.~Peng, Y.~Mu, J.~Duan, Y.~Guan, S.~E. Li, and J.~Chen, ``Separated
  proportional-integral lagrangian for chance constrained reinforcement
  learning,'' \emph{arXiv preprint arXiv:2102.08539}, 2021.

\bibitem{zhang2020first}
Y.~Zhang, Q.~Vuong, and K.~Ross, ``First order constrained optimization in
  policy space,'' in \emph{Advances in Neural Information Processing Systems},
  H.~Larochelle, M.~Ranzato, R.~Hadsell, M.~F. Balcan, and H.~Lin, Eds.,
  vol.~33, Online, 2020, pp. 15\,338--15\,349.

\bibitem{bhatnagar2009natural}
S.~Bhatnagar, R.~S. Sutton, M.~Ghavamzadeh, and M.~Lee, ``Natural actor--critic
  algorithms,'' \emph{Automatica}, vol.~45, no.~11, pp. 2471--2482, 2009.

\bibitem{ding2020natural}
D.~Ding, K.~Zhang, T.~Basar, and M.~R. Jovanovic, ``Natural policy gradient
  primal-dual method for constrained markov decision processes.'' in
  \emph{NeurIPS}, 2020.

\bibitem{ding2021provably}
D.~Ding, X.~Wei, Z.~Yang, Z.~Wang, and M.~Jovanovic, ``Provably efficient safe
  exploration via primal-dual policy optimization,'' in \emph{International
  Conference on Artificial Intelligence and Statistics}.\hskip 1em plus 0.5em
  minus 0.4em\relax PMLR, 2021, pp. 3304--3312.

\bibitem{pham2018optlayer}
T.-H. Pham, G.~De~Magistris, and R.~Tachibana, ``Optlayer-practical constrained
  optimization for deep reinforcement learning in the real world,'' in
  \emph{2018 IEEE International Conference on Robotics and Automation
  (ICRA)}.\hskip 1em plus 0.5em minus 0.4em\relax Brisbane, Australia: IEEE,
  2018, pp. 6236--6243.

\bibitem{ferlez2020shieldnn}
J.~Ferlez, M.~Elnaggar, Y.~Shoukry, and C.~Fleming, ``Shieldnn: A provably safe
  nn filter for unsafe nn controllers,'' \emph{arXiv preprint
  arXiv:2006.09564}, 2020.

\bibitem{Agrawal2017a}
A.~Agrawal and K.~Sreenath, ``Discrete control barrier functions for
  safety-critical control of discrete systems with application to bipedal robot
  navigation.'' in \emph{Robotics: Science and Systems}, 2017.

\bibitem{haarnoja2018soft}
T.~Haarnoja, A.~Zhou, P.~Abbeel, and S.~Levine, ``Soft actor-critic: Off-policy
  maximum entropy deep reinforcement learning with a stochastic actor,'' in
  \emph{International Conference on Machine Learning}.\hskip 1em plus 0.5em
  minus 0.4em\relax Stockholm, Sweden: PMLR, 2018, pp. 1861--1870.

\bibitem{fujimoto2018addressing}
S.~Fujimoto, H.~Hoof, and D.~Meger, ``Addressing function approximation error
  in actor-critic methods,'' in \emph{International Conference on Machine
  Learning}.\hskip 1em plus 0.5em minus 0.4em\relax Stockholm, Sweden: PMLR,
  2018, pp. 1587--1596.

\bibitem{borkar2009stochastic}
V.~S. Borkar, \emph{Stochastic approximation: a dynamical systems
  viewpoint}.\hskip 1em plus 0.5em minus 0.4em\relax Springer, 2009, vol.~48.

\bibitem{bhatnagar2012online}
S.~Bhatnagar and K.~Lakshmanan, ``An online actor--critic algorithm with
  function approximation for constrained markov decision processes,''
  \emph{Journal of Optimization Theory and Applications}, vol. 153, no.~3, pp.
  688--708, 2012.

\bibitem{prashanth2013actor}
L.~Prashanth and M.~Ghavamzadeh, ``Actor-critic algorithms for risk-sensitive
  mdps,'' 2013.

\bibitem{brockman2016openai}
G.~Brockman, V.~Cheung, L.~Pettersson, J.~Schneider, J.~Schulman, J.~Tang, and
  W.~Zaremba, ``Openai gym,'' \emph{arXiv preprint arXiv:1606.01540}, 2016.

\bibitem{todorov2012mujoco}
E.~Todorov, T.~Erez, and Y.~Tassa, ``Mujoco: A physics engine for model-based
  control,'' in \emph{2012 IEEE/RSJ International Conference on Intelligent
  Robots and Systems}.\hskip 1em plus 0.5em minus 0.4em\relax Algarve,
  Portugal: IEEE, 2012, pp. 5026--5033.

\bibitem{guan2021integrated}
Y.~Guan, Y.~Ren, S.~E. Li, H.~Ma, J.~Duan, and B.~Cheng, ``Integrated decision
  and control: Towards interpretable and efficient driving intelligence,''
  \emph{arXiv preprint arXiv:2103.10290}, 2021.

\bibitem{ge2020numerically}
Q.~Ge, S.~E. Li, Q.~Sun, and S.~Zheng, ``Numerically stable dynamic bicycle
  model for discrete-time control,'' \emph{arXiv preprint arXiv:2011.09612},
  2020.

\bibitem{bertsekas1997nonlinear}
D.~P. Bertsekas, ``Nonlinear programming,'' \emph{Journal of the Operational
  Research Society}, vol.~48, no.~3, pp. 334--334, 1997.

\bibitem{guan2021mixed}
Y.~Guan, J.~Duan, S.~E. Li, J.~Li, J.~Chen, and B.~Cheng, ``Mixed policy
  gradient,'' \emph{arXiv preprint arXiv:2102.11513}, 2021.

\end{thebibliography}
\end{document}